\newtheorem{theorem}{Theorem}
\definecolor{cvprblue}{rgb}{0.21,0.49,0.74}
\title{Degrees of Freedom Matter: \\ Inferring Dynamics from Point Trajectories}
\newcommand{\myparagraph}[1]{\vspace{4pt}\noindent\textbf{#1}}
\newcommand\blfootnote[1]{%
  \begingroup
  \renewcommand\thefootnote{}\footnote{#1}%
  \addtocounter{footnote}{-1}%
  \endgroup
}
\author{
  Yan Zhang$^{\dagger 2}$,\;  Sergey Prokudin$^{1,3}$,\; Marko Mihajlovic$^1$,\; Qianli Ma$^{\dagger4}$,\; Siyu Tang$^{1}$ \\
  $^1$ETH Z\"{u}rich, Switzerland , 
  $^2$Meshcapade \\
  $^3$ROCS, University Hospital Balgrist, University of Z\"{u}rich, 
  $^4$Nvidia \\ 
}
\begin{document}

\maketitle
\begin{abstract}

Understanding the dynamics of generic 3D scenes is fundamentally challenging in computer vision, essential in enhancing applications related to scene reconstruction, motion tracking, and avatar creation. In this work, we address the task as the problem of inferring dense, long-range motion of 3D points. By observing a set of point trajectories, we aim to learn an implicit motion field parameterized by a neural network to predict the movement of novel points within the same domain, without relying on any data-driven or scene-specific priors. To achieve this, our approach builds upon the recently introduced dynamic point field model~\cite{prokudin2023dynamic} that learns smooth deformation fields between the canonical frame and individual observation frames. However, temporal consistency between consecutive frames is neglected, and the number of required parameters increases linearly with the sequence length due to per-frame modeling. To address these shortcomings, we exploit the intrinsic regularization provided by SIREN~\cite{sitzmann2020implicit}, and modify the input layer to produce a spatiotemporally smooth motion field. Additionally, we analyze the motion field Jacobian matrix, and discover that the motion degrees of freedom (DOFs) in an infinitesimal area around a point and the network hidden variables have different behaviors to affect the model's representational power. This enables us to improve the model representation capability while retaining the model compactness. Furthermore, to reduce the risk of overfitting, we introduce a regularization term based on the assumption of piece-wise motion smoothness. Our experiments assess the model's performance in predicting unseen point trajectories and its application in temporal mesh alignment with guidance. The results demonstrate its superiority and effectiveness. The code and data for the project are publicly available\footnote{\label{footnote_supp}\url{https://yz-cnsdqz.github.io/eigenmotion/DOMA/}}.

\blfootnote{$\dagger$ This work was done when YZ and QM were at ETH Z\"{u}rich. }

\end{abstract}    

\begin{figure}
    \centering
    \includegraphics[width=\linewidth]{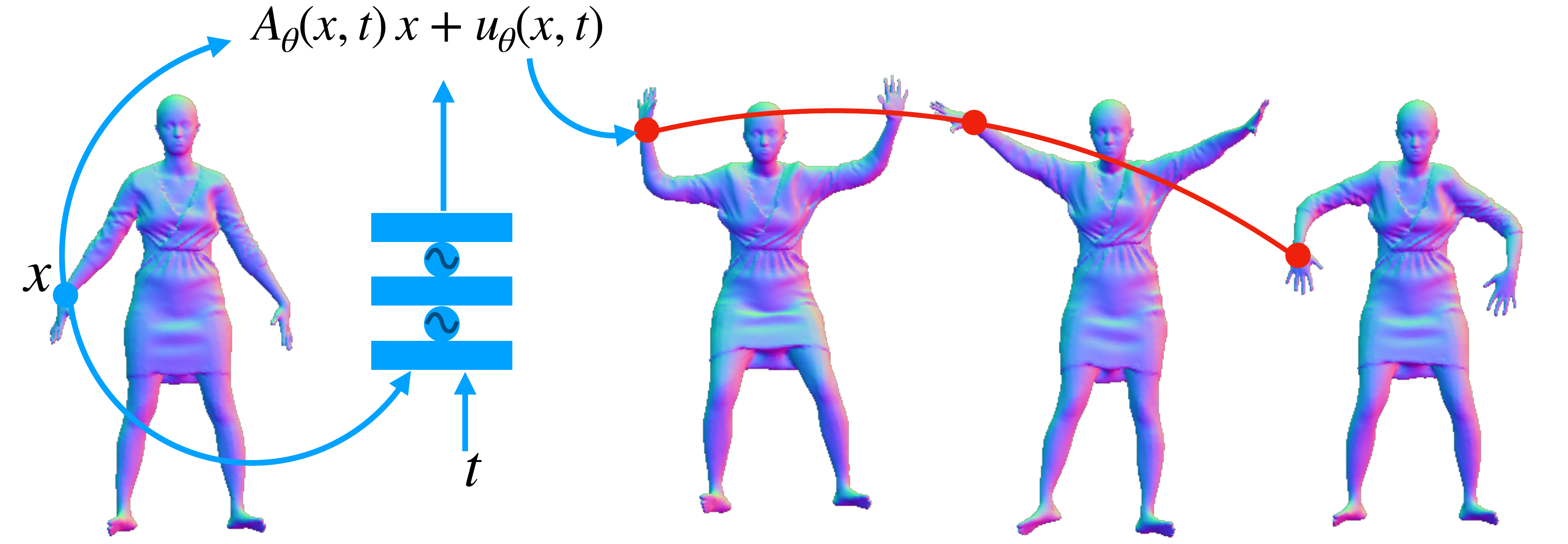}
    \caption{We introduce \textit{DOMA}, a compact implicit motion model designed to capture generic dynamics of 3D scenes. By processing a 3D point $\bm x$ in the canonical frame alongside a 1D time step $t$, DOMA predicts an affine mapping, parameterized by a linear map ${\bm A}_\theta$ and a translation vector ${\bm u}_\theta$. By leveraging the inherent regularity of the utilized SIREN framework~\cite{sitzmann2019siren}, DOMA ensures the generation of a spatiotemporally smooth motion field. The model’s capacity to represent complex dynamics can be controlled by adjusting the degrees of freedom of the output affine mapping.
    }
    \label{fig:teaser}
\end{figure}

\section{Introduction}
\label{sec:intro}

Motion estimation plays a crucial role in several key areas of computer vision, such as dynamic scene reconstruction, autonomous navigation, and avatar creation. Treated as a distinct task, it emerges in various contexts as non-rigid tracking~\cite{deng2022survey}, point set~\cite{tsin2004correlation,myronenko2010point} and mesh registration~\cite{bogo2014faust,groueix20183d}, shape matching~\cite{ovsjanikov2012functional}, as well as optical and scene flow estimation~\cite{zhai2021optical}. The solutions adopted in these contexts significantly differ based on the ultimate objective and the foundational assumptions about the scene. A substantial amount of research exists concentrating on human-centric~\cite{poppe2007vision,dfaust:CVPR:2017} and rigid object motion~\cite{behl2019pointflownet,wen2023bundlesdf}, alongside efforts in learning generic 2D motion priors in a data-driven manner~\cite{dosovitskiy2015flownet,karaev2023cotracker}. The diversity in applications and approaches underscores the complexity and significance of motion estimation within the field of computer vision~\cite{mathis2020primer}.

In this work, we aim to develop a motion model capable of reconstructing the dynamics of generic 3D scenes without relying on data-driven motion priors or object-specific models. Specifically, by analyzing observed point trajectories within dynamic 3D scenes, we seek to learn an implicit motion model capable of predicting the movement of novel 3D points. This task bears significant relevance to the process of warping 3D points across frames, a procedure commonly encountered in neural rendering~\cite{park2021nerfies,lombardi2019neural}, point cloud alignment~\cite{li2022non}, object tracking~\cite{wang2023omnimotion}, and avatar creation~\cite{ARAH:ECCV:2022,palafox2021npm,qian20233dgsavatar}. Typically, warping methods developed in these domains are intended to supplement the primary objectives such as the quality of novel view synthesis. Consequently, critical features such as the representational capability of the motion model, along with the consistency and plausibility of the motion it recovers, have not been the primary concern in these studies.

In this context, the closest work is the recently proposed dynamic point field (DPF) model~\cite{prokudin2023dynamic}, which addresses the task of recovering the correct implicit deformation function based on the observed pair of 3D surfaces. It proposes a lightweight deformation field formulated by a SIREN network~\cite{sitzmann2020implicit}, an MLP with periodic activation functions. Due to the regularity introduced by SIREN, its modeled deformation is spatially smooth, enabling various applications such as robust mesh registration and avatar animation.

Despite its effectiveness, the DPF method is limited to learning deformation fields between just two frames, the canonical and the target frames. To extend this to multiple frames, it proposes creating a set of deformation field models, where each model transforms points from the canonical frame to a distinct target frame. Consequently, the number of models increases with the sequence length, leading to significant memory and computational overhead. Moreover, the frame-wise approach fails to ensure temporal motion consistency, potentially resulting in discontinuities between consecutive frames and jittering artifacts. We conduct a thorough analysis of the representational capabilities of DPF and its underlying SIREN network to address these shortcomings. 
To achieve temporal smoothness, we follow the wave equation formulation in~\cite{sitzmann2020implicit}, and modify the input layer by incorporating a 1D time step along with the existing 3D point location input, similar to the strategy regularly implied in the implicit warping fields for neural rendering~\cite{pumarola2021d}.

Compared to the per-frame modeling scheme in DPF, this operation attempts to compress the entire sequence into a single network, which raises challenges on the model's capacity of motion representation.
Rather than increasing the number of hidden variables in the SIREN network, we opt to refine its output layer by introducing more motion degrees of freedom (DOFs).
From a mathematical standpoint of continuum mechanics~\cite{spencer2004continuum}, the advantages of additional DOFs are demonstrated in the Jacobian matrix of the motion field: Two points that are infinitely close to each other in space gain greater movement flexibility, provided the same number of network hidden variables.
Therefore, the model representation power is improved, and the model compactness is retained.

Nevertheless, additional DOFs can increase the risk of overfitting, in particular when the observed point trajectories are excessively sparse.
To overcome this issue, we leverage a generic assumption on motion, \ie piece-wise smoothness, and propose a motion smoothness term by penalizing the approximate L1 norm of spatial derivatives of the predicted transformations.
Here, rather than employing an auto-differentiation framework, we derive analytical gradients of our employed SIREN network to speed up the computation.

We undertake comprehensive experiments to validate the efficacy of our method. 
To assess its motion representation capabilities, we extract seven challenging sequences from the DeformingThings4D dataset~\cite{li20214dcomplete} and generate four synthetic sequences that exhibit basic 3D motions. Our method demonstrates consistently superior performance in predicting the motion of novel points, when compared to both state-of-the-art methods and their variations. 
Furthermore, we employ our technique in the task of temporal mesh alignment with guidance, and evaluate its performance on complex sequences from the Resynth dataset~\cite{ma2021power,ma2021scale}. 
Compared to the DPF baseline, our approach achieves comparable alignment accuracy, better temporal regularity, and significantly smaller models, occupying approximately 200KB versus 8MB in the saved checkpoint for a 30-frame sequence.

We refer to our approach as \textit{DOMA}, an acronym for \textbf{D}egrees \textbf{O}f freedo\textbf{M} m\textbf{A}tter, contending that additional degrees-of-freedom is essential to improve the expressivity of implicit motion models.
Technical contributions are summarized as follows:
\begin{itemize}
    \item We extend the state-of-the-art implicit model for surface deformation~\cite{prokudin2023dynamic} for continuous, multi-frame motion modeling, leading to an implicit, spatiotemporally smooth affinity field;
    \item We leverage the Jacobian matrix to analyze the motion field complexity, and discover that additional DOFs at the output layer improve the model representation capability while retaining the model compactness;
    \item To enhance the quality of the motion learned, we introduce a regularization term based on the piece-wise smoothness assumptions of the domain;
    \item We assess our model, demonstrating the benefits of various modeling decisions through experiments, on challenging long-term scene flow estimation and guided mesh alignment.
\end{itemize}

\section{Related Work}
\label{sec:related}

\myparagraph{Motion representation with object models.}
Given the motion of a set of points, it is a highly unconstrained problem to infer the motion of other points in their proximity. In many applications, such an inverse problem is solved based on an object model that performs as a strong prior of the dynamics.
Typical examples are marker-based human motion estimation~\cite{AMASS:ICCV:2019,loper2014mosh,zhang2021we}, or 3D pose estimation from imagery data~\cite{SMPL-X:2019,rempe2021humor,kanazawa2018end,zhang2021learning}, in which human parametric body models such as~\cite{SMPL-X:2019,xu2020ghum} are leveraged. 
Here, the bones of a body model serve as an intermediate proxy for all other points' motion: the trajectory of a point on the body surface is generated by the weighted average of the bone transformations, a technique referred to as linear blend skinning (LBS).
When extending the skinning weights to a vector field, as in~\cite{SCANimate:CVPR:2021,mihajlovic2021leap,chen2021snarf,wang2021metaavatar,ARAH:ECCV:2022,weng2022humannerf,COAP}, any point in the space can be animated by the bone transformations.
The same technology can be applied to animals~\cite{Zuffi:CVPR:2017}, babies~\cite{hesse2019learning}, humanoids~\cite{yuan2021simpoe}, and other categories.

In cases where the object model is not directly available, it can be jointly optimized together with the motion from  observations. 
This provides flexibility on the object categories to handle more generic dynamics.
For example, BANMO~\cite{yang2022banmo} proposes a generic deformable model, where a set of 3D Gaussians serve as the motion control proxy, analogous to bones. 
During optimization, the Gaussian locations and orientations are optimized together with their transformations.
Likewise, KeyTr~\cite{novotny2022keytr} proposes a bone basis to deform a point cloud across frames, in which the basis coefficients play a similar role as the skinning weights.

\myparagraph{Motion representation without object models.}
Another line of work models the motion of points without the reliance of intermediate proxies like bones. Instead, they represent the motion of all points in space as a dense field, in which each location stores a transformation matrix. The motion of a point will be determined by the transformations of its infinitesimal neighbourhood.
Methods under this paradigm are frequently employed in neural rendering~\cite{tewari2022advances,park2021nerfies,lombardi2019neural}, dense tracking~\cite{wang2023omnimotion}, surface reconstruction~\cite{niemeyer2019occupancy,palafox2021npm} and non-rigid geometry alignment~\cite{prokudin2023dynamic,li2022non}.
Niemeyer \etal~\cite{niemeyer2019occupancy} employ neural ODE~\cite{chen2018neural} to model the dynamics, and estimate the implicit occupancy function at the canonical frame and its evolution as time progresses.
Prokudin \etal~\cite{prokudin2023dynamic}, Pumarola \etal~\cite{pumarola2021d}, and Palafox \etal~\cite{palafox2021npm} leverage MLP to model a translation field (or scene flow field), and warp the point from the canonical frame to the target frame via addition.
Li \etal~\cite{li2021neural} employ a neural network to parameterize the flow field for regularization. The exploited network is a MLP with ReLU~\cite{krizhevsky2012imagenet} activation functions.
Park \etal~\cite{park2021nerfies} design a SE(3) transformation field, warping the points on the camera ray from the observation frame to the canonical frame, so as to train the neural radiance field~\cite{mildenhall2020nerf} reliably.
Lombardi \etal~\cite{lombardi2019neural} employs a mixture of scaled SE(3) warping fields for the purpose of neural rendering dynamic scenes.
Likewise, Li and Harada~\cite{li2022non} apply SE(3) or scaled SE(3) transformations to perform non-rigid point cloud alignment.
Compared to the SE(3) transformation, the scaled-SE(3) transformation is capable of representing the dilation or shrinking of an object. 
Going beyond points' locations, the spatial transformations can also be applied to features in neural networks and potentially improve the performance on \eg image classification~\cite{jaderberg2015spatial}.

\myparagraph{Relations to object shape and view recovery from images.}
Existing works such as~\cite{kanazawa2018learning,goel2020shape,gharaee2023self} study to learn neural models from an image collection, and recover the 3D shape in a canonical frame, the camera pose, and the texture of an object from a single image. 
Despite addressing different tasks, their solutions of composing the instance-level shape by the mean shape and deformation is relevant to our manner of motion modeling.
Furthermore, we are encouraged by these works to reconstruct dynamic scenes from multiview videos as future work.

\myparagraph{DOMA in context.}
Existing motion modeling approaches are developed together with individual applications, in which the network architectures, coordinate encodings, and other properties are diverse. 
The motion representation capabilities of their models are seldom investigated.
In contrast, we start with the basic assumption that the motion field has spatiotemporal regularity.
Therefore, we leverage the SIREN~\cite{sitzmann2019siren} network, and extend the start-of-the-art work DPF~\cite{prokudin2023dynamic} to a multi-frame, smooth affinity field model.
We leverage knowledge of continuum mechanics, exploit the Jacobian matrix to describe the motion field complexity, and find that DOFs at the output layer and the network hidden variables affect the model representation power in different manners.
Guided by these insights, we propose a solution to increase the model capacity while retaining the model lightweight.
Moreover, we introduce a smoothness regularization term to overcome overfitting, which does not assume the underlying motion is \eg rigid like in~\cite{park2021nerfies}.
The effectiveness of DOMA is demonstrated with experiments in Sec.~\ref{sec:exp}.

\section{Method}
\label{sec:method}

\subsection{Preliminaries}
\label{sec:method:pre}
\subsubsection{SIREN~\cite{sitzmann2020implicit}}
SIREN proposes an implicit neural representation, which is a multilayer perceptron (MLP) with periodic activation functions. Specifically, the MLP with $n+1$ layers is given by
\begin{equation}
    {\bm y} = {\bm W}_n \left(\phi_{n-1} \circ \phi_{n-2} \circ \cdots \circ \phi_{0} \right)({\bm x}) + {\bm b}_n,
\end{equation}
with $\phi_{i} = \sin({\bm W}_i {\bm x}_i + {\bm b}_i)$ and $i=\{0,1,...,n-1\}$.
The gradient of the model w.r.t. the input is another phase-shifted SIREN network, and hence is infinitely differentiable.
As reported in~\cite{sitzmann2020implicit}, the sinusoidal activation functions boost the model performance on the convergence speed, reconstruction quality, and smoothness, letting the MLP outperform baselines consistently and considerably.
However, this network requires special initialization to be trainable. Given ${\bm x} \in \mathbb{R}^d$, it is suggested to have the weights $w_i$ in the uniform distribution $\mathcal{U}(-\sqrt{6/d}, \sqrt{6/d})$, so that the model output will converge to a normal distribution.

\subsubsection{Dynamic Point Field (DPF)~\cite{prokudin2023dynamic}}
DPF proposes an implicit deformation field to model point dynamics, achieving state-of-the-art performance on surface reconstruction, geometry deformation, and avatar animation with challenging clothing.
Given a point ${\bm x} \in \mathbb{R}^3$ in the canonical frame, it learns a field ${\bm u}: \mathbb{R}^3 \to \mathbb{R}^3$ formulated by a SIREN network~\cite{sitzmann2019siren}, and then transforms the point to a new location ${\bm y}$, \ie 
\begin{equation}
\label{eq:dpf}
 {\bm y} = g({\bm x}) = {\bm x} + {\bm u}({\bm x}).
\end{equation}

To handle complex and rapid motion, an \textit{as-isometric-as-possible} (AIAP) loss term~\cite[Eq.13]{prokudin2023dynamic} is proposed to minimize changes of pair-wise distances between neighbor points during deformation.
Furthermore, it proposes guided geometry deformation via corresponding keypoints, which can avoid sub-optimal matching caused by the Chamfer distance~\cite{fan2017point}. 
Provided a set of keypoint pairs $\{ ({\bm v}_i^c, {\bm v}_i^t) \}_{i=1}^N$, and a pair of non-corresponding geometries (\eg meshes and point clouds) to align $({\bm M}_c, {\bm M}_t)$, the guided geometry deformation can be performed by minimizing

\begin{equation}
\label{eq:dpf_align}
    \alpha_1 \mathcal{L}_{CD} \left( {\bm M}_t, {\bm g}({\bm x}) \right) + \alpha_2\mathcal{L}_V ({\bm v}^c, {\bm v}^t ) + \alpha_3\mathcal{L}_{AIAP}\left({\bm g}({\bm x}), {\bm x}\right),
\end{equation}
in which ${\bm x} \in {\bm M}_c$, $\alpha$s denote the loss weights, and $\mathcal{L}_{CD}$, $\mathcal{L}_{V}$, and $\mathcal{L}_{AIAP}$ denote the Chamfer loss, the L1 loss, and the AIAP loss on the corresponding keypoints, respectively.
To align a sequence of geometries, DPF~\cite{prokudin2023dynamic} suggests learning a set of deformation fields that warp points in the canonical frame to individual target frames.

\subsection{DOMA: Spatiotemporal Affinity Motion Fields}

DOMA is an implicit motion field formulated by
\begin{equation}
\label{eq:affinity}
 {\bm y} = {\bm A}({\bm x}, t){\bm x} + {\bm u}({\bm x}, t),
\end{equation}
in which ${\bm x} \in \mathbb{R}^3$ is a point in the canonical frame, $t \in \mathbb{R}$ is the time step, ${\bm A}: \mathbb{R}^3 \times \mathbb{R} \to \mathbb{R}^{3\times 3}$ and ${\bm u}: \mathbb{R}^3 \times \mathbb{R} \to \mathbb{R}^{3}$ are formulated by a shared SIREN network.
Following Sitzmann et al.~\cite[Sec.5.4 of supp. mat.]{sitzmann2020implicit} that how the wave equation is formulated and solved, we incorporate the 1D time into SIREN as input, letting $\frac{\partial {\bm y}}{\partial t}$ be another phase-shifted SIREN and get regularized.

\subsubsection{On The Representation Power}
Different from the per-frame modeling mechanism of DPF~\cite{prokudin2023dynamic}, incorporating the 1D time domain into the input layer compresses the entire sequence into a single network, raising challenges on the model representation power.

Referring to~\cite{spencer2004continuum}, the DPF formula Eq.~\eqref{eq:dpf} is generic to model object deformation in continuum mechanics. However, it has limitations in empirical studies, motivating us to investigate the reasons. Rather than studying the entire domain, we look into an infinitesimal region around an arbitrary 3D point ${\bm x}$ in the canonical frame, and derive its Jacobian matrix as
\begin{equation}
\label{eq:dpf_jacob}
 \frac{\partial{\bm y}}{\partial{\bm x}} = {\bm I} + \nabla {\bm u}({\bm x}),
\end{equation}
which is the optimal linear approximation of the motion around ${\bm x}$ and $\nabla$ denotes the spatial gradient.

This Jacobian matrix is able to reflect the motion complexity.
Intuitively, $\frac{\partial{\bm y}}{\partial{\bm x}}$ indicates the difference of movements between two points that are infinitely close to each other. 
Without any constraints on $\nabla {\bm u}$, the model is capable of representing highly complex motion. 
However,  ${\bm u}$ is formulated by SIREN~\cite{sitzmann2020implicit}, letting $\nabla {\bm u}$ become to
\begin{equation}
\label{eq:dpf_jacob2}
    \nabla {\bm u} =  {\bm W}_n  \left(\prod_{i=0}^{n-1}{\bm W}_i \circ \varphi_{i}({\bm x}) \right) ,
\end{equation}
with $\varphi_{i} = \cos({\bm W}_i {\bm x}_i + {\bm b}_i)$. 
Due to $|\varphi_{i}| \leq 1 $, we can derive (see Sec.~\ref{sec:supp:theorem} in supp. mat.)
\begin{align}
\label{eq:dpf_jacob_bound}
    \|\nabla {\bm u}\|_{2}  & \leq d^\frac{n}{2} \left( \prod_{i=0}^{n} \| {\bm W}_i \|_2 \right)  \left( \prod_{i=0}^{n-1} \|{\varphi_{i}}({\bm x}) \|_2  \right) \\ & \leq d^n \cdot \prod_{i=0}^n \|{\bm W}_i\|_{2}
\end{align}
in which $\|\cdot \|_2$ is the L2 norm, \ie \textbf{the largest singular value} of the matrix and Euclidean norm of the vector.

Consequently, the movement difference between two neighboring points in the domain is constrained by Eq.~\eqref{eq:dpf_jacob} and Eq.~\eqref{eq:dpf_jacob_bound}.
To increase the representation power, one can straightforwardly increase the number of hidden layers, or the dimension of hidden variables, because both can increase the upper-bound of $\|\nabla {\bm u}\|_{2}$.

Our DOMA model can improve the model capacity without modifying the hidden layers.
Referring to Eq.~\eqref{eq:affinity}, its Jacobian matrix is given by
\begin{equation}
 \frac{\partial{\bm y}}{\partial{\bm x}} = {\bm A} + \langle \nabla {\bm A}, {\bm x} \rangle + \nabla {\bm u}({\bm x}),
\end{equation}
which replaces the identity matrix in Eq.~\eqref{eq:dpf_jacob} with two complex terms.
Since the identity matrix in Eq.~\eqref{eq:dpf_jacob} does not contribute to the motion complexity, our method intrinsically increases the complexity, while keeping the model hidden layers unchanged.
Consequently, more complex linear transformations with additional DOFs, such as scaling and shearing, are introduced to every infinitesimal area in the entire domain, thereby increasing the motion complexity globally.

\subsubsection{The Variants of DOMA}
The motion complexity can be controlled by applying different constraints on ${\bm A}$, leading to different versions according to the DOFs, inspired by existing works \eg~\cite{park2021nerfies,lombardi2019neural,li2022non}.
We denote the model Eq.~\eqref{eq:affinity} as \textbf{DOMA-Affinity}.
In our implementation, the SIREN network outputs a 12-dimensional variable. The first 9 variables are reshaped to ${\bm A}$ and the rests are regarded as ${\bm u}$.

\myparagraph{DOMA-Trans.}
When ${\bm A}$ is an identity matrix, Eq.~\eqref{eq:affinity} degenerates to a translation field, which formulated by
\begin{equation}
\label{eq:tdpf}
 {\bm y} = {\bm x} + {\bm u}({\bm x}, t).
\end{equation}
This model can be regarded as a straightforward extension of DPF~\cite{prokudin2023dynamic} to the spatiotemporal domain.

\myparagraph{DOMA-SE(3).}
With ${\bm A} = {\bm Q} \in SO(3)$, \ie a rotation matrix in the 3D space, Eq.~\eqref{eq:affinity} becomes to
\begin{equation}
\label{eq:se3}
 {\bm y} = {\bm Q}({\bm x}, t) {\bm x} + {\bm u}({\bm x}, t).
\end{equation}
Besides producing the translation, we let the SIREN network output the 6D continuous rotation representation~\cite{zhou2019continuity}, and perform orthogonalization to get the rotation matrix.

\myparagraph{DOMA-Scaled SE(3).}
By introducing an additional DOF for scaling, we can modify Eq.~\eqref{eq:affinity} to
\begin{equation}
\label{eq:scaledse3}
 {\bm y} = s({\bm x}, t) {\bm Q}({\bm x}, t) {\bm x} + {\bm u}({\bm x}, t),
\end{equation}
with $s({\bm x}, t)$ being the spatiotemporal scalar field.
In our implementation, we let the SIREN network to produce an additional 1D variable, and apply the softplus activation function~\cite{zheng2015improving,dugas2000incorporating} to product $s({\bm x}, t)$.

\subsubsection{Model Complexity Analysis}
By changing DOFs at the SIREN output layer, the hidden layers remain unchanged, and the model size is not linearly growing with the sequence length.
Provided the motion sequence has $T$ frames, DOMA-Affinity has $16d + nd^2$ parameters, with $n$ and $d$ denoting the number of hidden layers and the hidden dimensions, respectively.
In contrast, the per-frame modeling of DPF~\cite{prokudin2023dynamic} requires $T-1$ SIRENs and has $(6d+nd^2)(T-1)$ parameters in total, leading to $\mathcal{O}(T)$ model size.
Please see Tab.~\ref{tab:supp:complexity} in supp. mat. for more details.

\subsection{Motion Smoothness Regularization}

Although introducing extra DOFs can improve the model representation power, it increases the risk of overfitting. 
Without loss of generality, we assume the motion field is piece-wise smooth in the domain.
Referring to variational methods in optical flow \eg~\cite{zimmer2011optic,brox2004high}, we introduce the following smoothness regularization loss, \ie
\begin{equation}
    \label{eq:regh}
    \mathcal{L}_{H} = \mathbb{E}_{ t, {\bm x}} \left[ \Psi\left(\|\nabla {\bm A}\|_F^2 + \|\nabla {\bm u}\|_F^2 \right) \right],
\end{equation}
in which $\Psi(s^2) = \sqrt{1+s^2} -1$ is the convex Charbonnier function~\cite{charbonnier1994two} to approximate the L1 norm, $\nabla (\cdot)$ denotes computing the spatial gradients, and $\|\cdot\|_F$ denotes the Frobenius norm. 
Intuitively, the local motion is parameterized by ${\bm A}$ and ${\bm u}$, and hence the zero value of the above loss term suggests all points conduct the identical affine transformation.
The Charbonnier function plays the role of a robustifier, with which the difference of motions between two neighboring points is less penalized compared to L2 norm. 
With prior knowledge on the scene dynamics, $\Psi(s^2)$ can be changed to other terms, \eg $\Psi(s^2) = s^2$ to encourage homogeneous motion.

Instead of employing auto-differentiation tools in \eg PyTorch~\cite{paszke2019pytorch}, we implement analytical gradients of the SIREN network, \ie Eq.~\eqref{eq:dpf_jacob}, to speed up computation. See Tab.~\ref{tab:supp:runtime} in supp. mat. for an empirical study.

\section{Experiment}
\label{sec:exp}

Without explicit mentioning, we set the first frame in the sequence as the canonical frame, and normalize the time steps to $[-1,1]$. 
Please see supp. mat. for more details and additional experiments.

\subsection{Novel Point Motion Prediction}
\label{sec:exp1}
Based on a sparse set of observed point trajectories, we aim to predict the motion of unseen points during training, in order to verify the quality of learned dynamics.

\myparagraph{Datasets.}
We select 7 sequences with various object categories, shapes, and motions from the DeformingThings4D~\cite{li20214dcomplete} dataset.
For each sequence, we use 100 consecutive frames. 
We randomly select 25\% mesh vertices for training the motion field, and use the remaining ones for testing.
In addition, we create four synthetic sequences of elemental motions, \ie translation, rotation, scaling, and shearing, respectively, in order to investigate the model representation power in detail.
Each synthetic sequence has 20 frames and contains 3000 points uniformly sampled from $[-1,1]^3$. 
Likewise, we randomly select 25\% of points for training, and leave the remaining ones for testing.

\myparagraph{Evaluation metrics.}
For evaluation, we employ the learned motion field to transform testing points in the canonical frame to individual target frames.
We compute the scene flow end point error (EPE), \ie $ \mathbb{E}_{t \in \{1...,T\}}[\|{\bm v}_t - {\bm v}^{gt}_t\|_1]$ and ${\bm v}_t = {\bm y}_t - {\bm x}$, in which ${\bm y}_t$ denotes the estimated corresponding point of ${\bm x}$ at time $t$.
For DeformingThings4D, we additionally use the learned motion field to warp the canonical object mesh to each individual frame, sample $10^6$ points on both the warped mesh and the ground truth mesh, and compute the Chamfer distance~\cite{fan2017point} $\mathcal{L}_{CD}$ and the Chamfer normal distances $\mathcal{L}_n$, as in~\cite[Table 2]{prokudin2023dynamic}.

\myparagraph{Baselines and ours.}
This task is highly related to learning the warping fields in various scenarios, such as deformable object modeling~\cite{palafox2021npm}, scene flow estimation~\cite{li2021neural}, and neural rendering~\cite{pumarola2021d,park2021nerfies}.
The warping field is commonly parameterized with a neural network with ReLU activation functions~\cite{krizhevsky2012imagenet} and positional encodings~\cite{mildenhall2020nerf}.
Therefore, we leverage such kinds of neural networks as baselines. 
Specifically, we denote \textit{MLP-ReLU} as an MLP with ReLU activation functions and 6 hidden layers of 128 dimensions.
It takes the concatenation of the 3D location and the 1D time step as input and outputs motion vector.
Additionally, we introduce positional encoding~\cite{mildenhall2020nerf}, or output DCT coefficients~\cite{wang2021neural}, to create \textit{MLP-ReLU PE.6} and \textit{DCT-NeRF} as baselines, respectively.
Moreover, we adapt the \textit{BANMO}~\cite{yang2022banmo} deformation module into our setting, and implement a modified version named \textit{BoneCloud}, following the idea of~\cite{novotny2022keytr}. 
More details of these baseline methods are demonstrated in Sec.~\ref{sec:supp:exp_prediction} of supp. mat.

We denote the DOMA models with their suffixes, \ie \textit{-Trans}, \textit{-SE(3)}, \textit{-Scaled SE(3)}, and \textit{-Affinity}, respectively.
All their SIREN networks have 128 hidden dimensions and 2 hidden layers. 
Moreover, we implement the elastic regularization proposed in~\cite{park2021nerfies} to encourage rigid motion, which is denoted by \textit{-E}. Likewise, \textit{-H} denotes our motion smoothness regularization \textit{without} the Charbonnier function, encouraging the motion is homogeneous.

\begin{table}
    \centering
    \footnotesize
    \begin{tabular}{lccc}
    \toprule
         \textit{Methods} & \textit{EPE}$\downarrow$ & $\mathcal{L}_{CD}$ $\downarrow$ & $\mathcal{L}_{n}$ $\downarrow$  \\
         \midrule
         MLP-ReLU~\cite{mildenhall2020nerf} & 222.4 & 3.410 & 0.411\\
         MLP-ReLU PE.6~\cite{mildenhall2020nerf} &  237.7 & 3.747 & 0.431\\
         DCT-NeRF~\cite{wang2021neural} & 215.0 & 3.766 & 0.347\\
         BANMO~\cite{yang2022banmo} & 488.9 & 13.275 & 0.451\\
         BoneCloud~\cite{yang2022banmo,novotny2022keytr} & 136.1 & 1.993 & 0.261\\
         \midrule
         Ours-Trans & 78.5 & 1.401 & \textbf{0.215}\\
         Ours-SE(3) & 76.7 & 3.706 & 0.225\\
         Ours-Scaled SE(3) & \textbf{76.2} & 2.074 & 0.220\\
         Ours-Affinity & 78.1 & \textbf{1.266} & 0.218\\
        \bottomrule
    \end{tabular}
    \caption{Results on DeformingThings4D sequences. EPE and $\mathcal{L}_{CD}$ are in $\times 10^{-4}$. Best results are in boldface. }
    \label{tab:df4d}
\end{table}

\begin{table}
    \centering
    \footnotesize
    \begin{tabular}{lcccc}
    \toprule
         \textit{Methods} & Rotation & Scaling & Shearing & Translation  \\
         \midrule
         -Trans & 2725.4 & 1817.8 & 1619.5 & 1042.4 \\
         -SE(3) & {730.6} & 1991.4 & 1138.3 & 899.4 \\
         -Scaled SE(3) & {801.1} & {685.8} & 1524.7 & {1096.2} \\
         -Affinity & 1486.0 & {915.4} & {622.1} & 822.4\\
         \midrule
         -Trans-E & 38.0 & 1669.6 & {753.6} & 38.8 \\
         -SE(3)-E & 20.0 & 1761.3 & 832.7 & 26.4 \\
         -scaled SE(3)-E & 21.2 & {1161.8} & 961.1 & 24.0 \\
         -Affinity-E & {19.2} & 155.7 & 864.0 & {15.7} \\
         \midrule
         -Trans-H & 4919.9 & 2056.4 & 2446.8 & 37.8 \\
         -SE(3)-H & 52.4 & 2012.4 & 1665.0 & 36.9 \\
         -scaled SE(3)-H & 29.3 & 22.1 & 688.0 & 30.3 \\
         -Affinity-H & \textbf{5.4} & \textbf{26.3} & \textbf{8.5} & \textbf{28.8} \\
        \bottomrule
    \end{tabular}
    \caption{Results on Synthetic sequences w.r.t. EPE (in $\times 10^{-4}$). `-E' denotes the elasticity loss proposed in Nerfies~\cite{park2021nerfies}, and `-H' denotes our smoothness loss. }
    \label{tab:synthetic}
\end{table}

\begin{figure}
    \centering
    \includegraphics[width=0.99\linewidth]{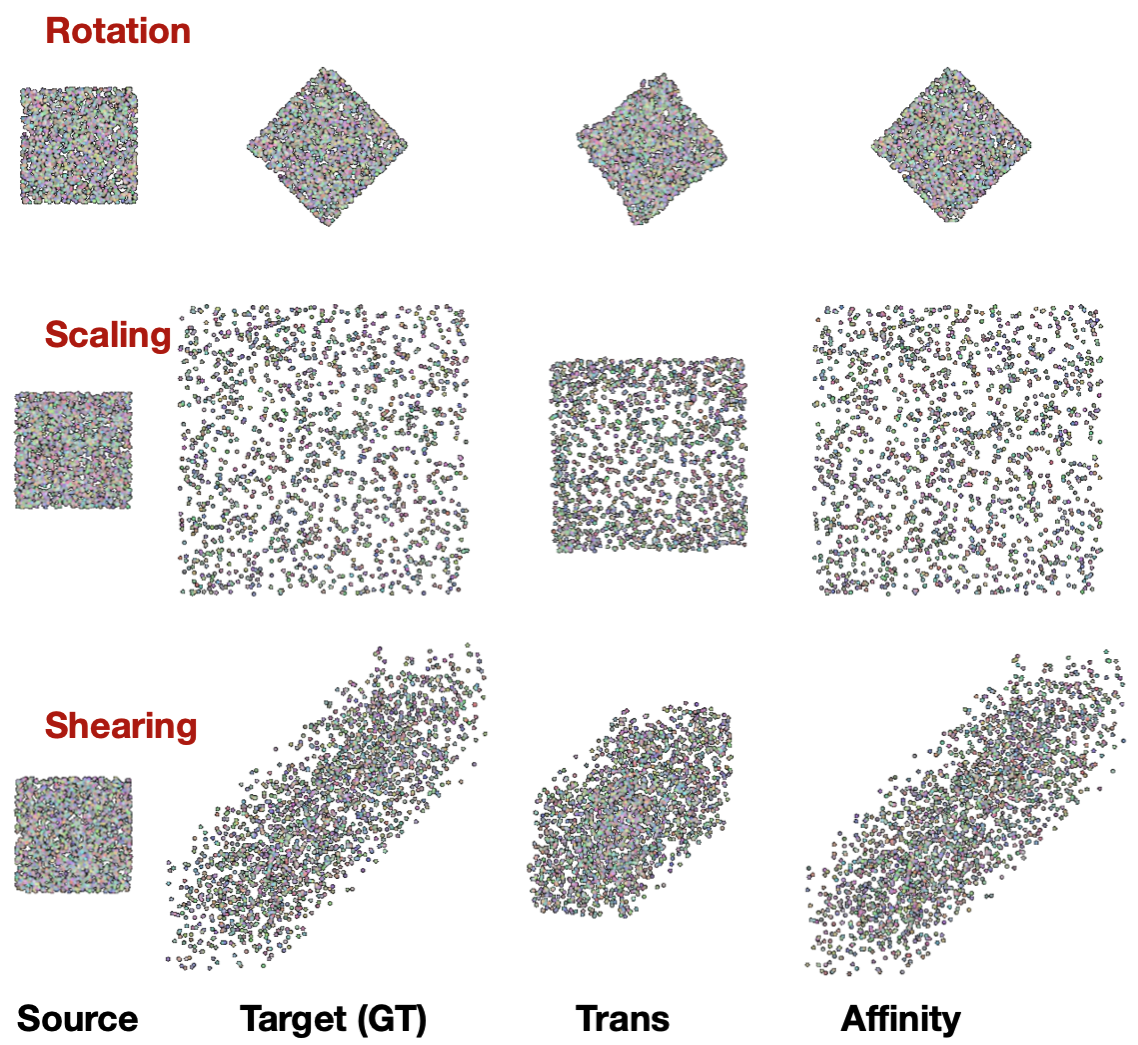}
    \caption{Qualitative results on the Synthetic sequences. The smoothness regularization is applied. Rows show types of motions, and columns show the testing points in the canonical frame, a target frame, and estimated results, respectively. See more descriptions in Sec.~\ref{sec:exp1} and Sec.~\ref{sec:supp:more_on_synthetic} in supp. mat.}
    \label{fig:synthetic}
\end{figure}

\myparagraph{Results.}
The results on the DeformingThings4D sequences are presented in Tab.~\ref{tab:df4d}.
We can see that the SIREN-based methods achieve comparable performance, and outperform other baseline methods consistently.

The results on the synthetic sequences are shown in Tab.~\ref{tab:synthetic} and Fig.~\ref{fig:synthetic}.
First, we can see that the incorporated DOFs lead to different motion representation behaviors.
Methods with SE(3) transformations are more effective in representing rotation and translation. 
By comparing `-Scaled SE(3)' and `-SE(3)', we can see the additional DOF benefits modeling scaling.
The affine transformation is effective in representing all cases.
Second, we can see appropriate regularization is important. 
The elasticity regularization proposed in~\cite{park2021nerfies} is effective to encourage rigid motions, but performs inferior if the motion is non-rigid, \ie scaling and shearing. 
On the other hand, our smoothness regularization significantly boosts the performance, if the sequence fits the modelled DOFs.

Moreover, we find the structure of the linear matrix ${\bm A}$ and the network hidden dimension have different behaviors to affect the model representation power. 
Increasing the hidden dimension cannot simply introduce new DOFs that the model can represent. see Sec.~\ref{sec:supp:more_on_synthetic} in supp. mat. for details.

\subsection{Guided Mesh Alignment}
\label{sec:exp2}
Aligning a mesh template to a sequence of scans is important in various applications, such as graphics~\cite{ma2021scale} and healthcare~\cite{hesse2019learning}.
We follow the guided geometry deformation method of DPF~\cite{prokudin2023dynamic}.
Referring to Eq.~\ref{eq:dpf_align}, we minimize

{
\scriptsize
\begin{equation}
    \label{eq:gmaloss}
    \sum_t \{\alpha_1\mathcal{L}_{CD} \left( {\bm M}_t, {\bm y}({\bm x}) \right) + 
    \alpha_2\mathcal{L}_V ({\bm v}^c, {\bm v}^t ) 
    + \alpha_3\mathcal{L}_{AIAP}\left({\bm y}({\bm x}), {\bm x}\right) + \alpha_4\mathcal{L}_H  \} .
\end{equation}
}

In this experiment, we set $(\alpha_1, \alpha_2) = (10^3, 1)$. The regularization loss weights are $(\alpha_3, \alpha_4) = (1, 0.001)$ if they are enabled. Training terminates after 2000 iterations for all cases.

\myparagraph{Datasets.}
We employ the Resynth dataset~\cite{ma2021scale,ma2021power}, in which the human bodies perform articulate motions, while the clothing, in particular the long skirt, moves accordingly in a highly complex manner.
We use 4 sequences from 4 individual subjects with different genders, body shapes, and clothing types. Each sequence is downsampled by every 2 frames, and afterwards the first 30 frames are selected, leading to 16 sequences with 480 frames in total. 
We use the SMPL-X~\cite{SMPL-X:2019} mesh vertices (10,475 points per frame) as the guidance points, and the low-resolution scans (40,000 points per frame) as the targets to fit. 
Furthermore, we perform Poisson surface reconstruction on the low-res scan at the canonical frame, and obtain a mesh template with about 60K vertices and 130K faces. 
During training, we learn the motion field based on the guidance points and the low-res scans, and minimize Eq.~\eqref{eq:gmaloss}.
During testing, we warp the mesh template vertices to individual frames based on the learned motion field, and re-compute the vertex normals. 
Note these mesh template are unseen during training.

\myparagraph{Evaluation metrics.}
For evaluation, we compute the Chamfer distance of the vertex locations and normals between the warped meshes with the target low-res scans, \ie $\mathcal{L}_{CD}$ and $\mathcal{L}_n$, as in Tab.~\ref{tab:df4d} as well as DPF~\cite[Table 2]{prokudin2023dynamic}. 
To verify the temporal smoothness, we additionally compute two metrics: 
1) The standard deviation (std) of edge lengths along the temporal dimension, and report its maximal value. This metric is able to reflect whether the mesh is significantly stretched or not.
2) The averaged std of the velocity l2-norm of the mesh vertices, which measures the temporal smoothness.
These two metrics are denoted as \textit{STD(E)} and \textit{STD(V)}, respectively.
Their values are the lower the better, but should not vanish because of the conducted deformation and motion.

\myparagraph{Baselines and ours.}
We compare the frame-wise DPF scheme that is suggested by~\cite{prokudin2023dynamic}.
In addition, we investigate the effectiveness of the regularization loss terms AIAP~\cite[Eq.13]{prokudin2023dynamic} and our proposed motion smoothness term.
The method notations are the same with Sec.~\ref{sec:exp1}.
All SIREN networks have 3 hidden layers and 128 hidden dimensions.

\begin{figure*}
    \centering
    \includegraphics[width=\linewidth]{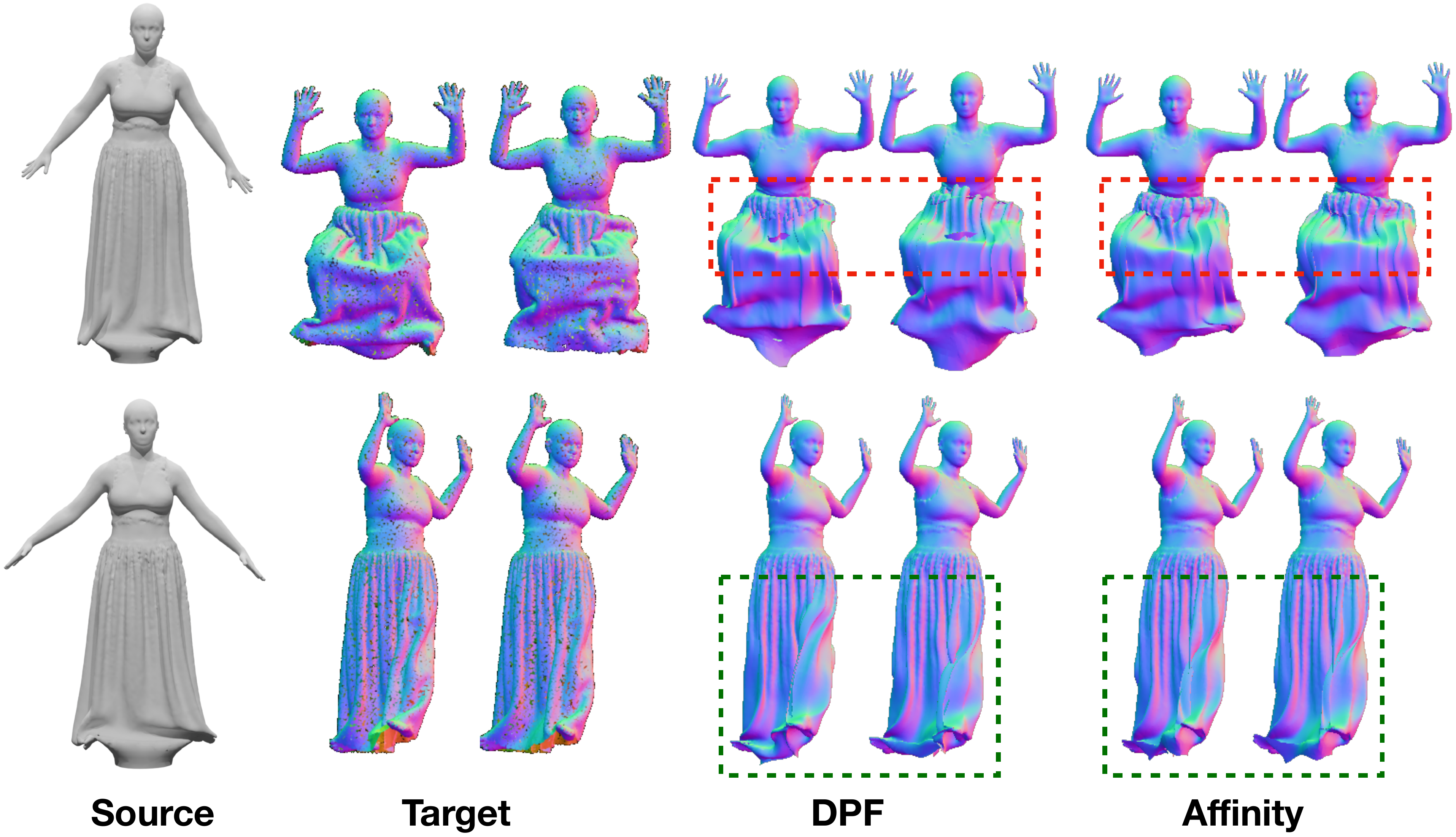}
    \caption{Illustration of results on two Resynth sequences. From left to right: The source mesh in the canonical frame, two consecutive frames of the target scans, the results from frame-wise DPF and DOMA-Affinity, respectively. Both AIAP~\cite{prokudin2023dynamic} and smoothness regularization are applied. The bounding boxes highlight significant changes. }
    \label{fig:resynth}
\end{figure*}

\begin{table}
    \centering
    \footnotesize
    \begin{tabular}{lcccc}
    \toprule
          & $\mathcal{L}_{CD}\downarrow$ & $\mathcal{L}_n\downarrow$ & \textit{STD(E)}$\downarrow$  & \textit{STD(V)}$\downarrow$   \\
           \midrule
         DPF~\cite{prokudin2023dynamic} & 1.149 & \textbf{0.122} & \textbf{11.6} & 24.6\\
         -Trans & 1.230 & 0.128 & 12.8 & 22.9  \\
         -Affinity & \textbf{1.142} & 0.125 & 11.9 & \textbf{22.8} \\
         \midrule
         DPF-A~\cite{prokudin2023dynamic} & 1.166 & \textbf{0.119} & \textbf{10.3} & 24.2\\
         -Trans-A & 1.195 & 0.123 & 10.4 & \textbf{23.0}  \\
         -Affinity-A & \textbf{1.151} & 0.122 & 10.6 & \textbf{23.0} \\
         \midrule
          DPF-H~\cite{prokudin2023dynamic} & 1.142 & \textbf{0.123} & 10.3 & 24.2\\
         -Trans-H & 1.207 & 0.128 & 10.8 & \textbf{22.9}  \\
         -Affinity-H & \textbf{1.127} & 0.127 & \textbf{10.1} & \textbf{22.9} \\
         \midrule
         DPF-AH~\cite{prokudin2023dynamic} & 1.189 & \textbf{0.120} & 9.3 & 24.3\\
         -Trans-AH & 1.240 & 0.124 & 9.3 & \textbf{23.0}  \\
         -Affinity-AH & \textbf{1.187} & 0.124 & \textbf{8.9} & \textbf{23.0} \\
        \bottomrule
    \end{tabular}
    \caption{Results of guided mesh alignment on our selected Resynth sequences. $\mathcal{L}_{CD}$ is in $\times 10^{-4}$. \textit{STD(E)} and \textit{STD(V)} are given in millimeters. `-A' and `-H' denote the AIAP regularization~\cite{prokudin2023dynamic} and our smoothness regularization, respectively. `-AH' denotes both regularization terms are applied. Best results are in boldface. Please see Tab.~\ref{tab:supp:resynth_all} for the performance of all models.}
    \label{tab:resynth}
\end{table}

\begin{table}
    \centering
    \footnotesize
    \begin{tabular}{lcc}
    \toprule
          & \#params. & checkpoint size (KB)   \\
           \midrule
          DPF~\cite{prokudin2023dynamic} & 1497600 & 7800 \\
         \midrule
         -Trans & 50048 & 139.6   \\
         -SE(3) & 50816 & 209.2  \\
         -Scaled SE(3) & 50944 & 209.8  \\
         -Affinity & 51200 & 210.8\\
         
        \bottomrule
    \end{tabular}
    \caption{Evaluations on the model size on the Resynth sequence. Since lightweight models are preferred, the numbers here are the lower the better. }
    \label{tab:dpfbm}
\end{table}

\myparagraph{Results.}
The results are shown in Tab.~\ref{tab:resynth}.
Compared to frame-wise DPF, we can see DOMA-Trans leads to consistently worse alignment accuracy, but better temporal smoothness.
This indicates the temporal regularity is obtained by compressing the entire motion into a single SIREN-based network, whereas the model's representational power is not sufficient.
When replacing the translation field by an affinity field, \ie DOMA-Affinity, the alignment accuracy is consistently improved to a similar level with frame-wise DPF, and the temporal smoothness is retained.
This indicates that introduced extra DOFs can effectively improve the model representation power.
In addition, the AIAP loss and the smoothness regularization can individually improve the performances, but their combination does not lead to obvious advantages, except for the edge length variations.
Fig.~\ref{fig:resynth}~\footnote{Quantitatively, \ie ($\mathcal{L}_{CD}$, $\mathcal{L}_{n}$, STD-E, STD-V) as in Tab.~\ref{tab:resynth}, DPF gives $(3.962,0.229,16.0,35.7)$ and $(2.557,0.175,13.8,18.5)$ for the top and bottom rows, respectively, whereas DOMA-Affinity gives $(3.208,0.216,19.8,29.3)$ and $(2.532,0.169,10.5,15.6)$.} illustrates some pairs of consecutive frames. 
We can see that the frame-wise DPF scheme causes visible discontinuities and artifacts in some regions, whereas the results of the affinity field have better temporal regularity.

Furthermore, the advantage of DOMA can be reflected by the model compactness. 
As shown in Tab.~\ref{tab:dpfbm}, DOMA models are significantly more lightweight. 
Adding additional DOFs at the network output layer only increases the number of parameters marginally.
\section{Conclusion}

In this work, we have advanced the DPF framework~\cite{prokudin2023dynamic} into a continuous, multi-frame affinity field model, which inherently exhibits spatiotemporal regularity and improves representational capabilities without compromising compactness.
Incorporating the 1D time domain to the network input layer ensures temporal regularity, and the DOFs at the output layer can manipulate the model representation power without modifying the network hidden variables.
The experimental results on novel point motion prediction and guided mesh alignment show its effectiveness and superiority to baselines.

\myparagraph{Limitations and future works.}
First, we have 4 loss terms to minimize in the task of guided mesh alignment, and inappropriate loss weights can degrade the performance considerably. How to balance their weights is still not transparent, which is worthy exploring in the future.
Second, our method can be employed to model warping fields for dynamic scene reconstruction and rendering, which is not covered in this paper and will be studied as future work.
Note that our method requires corresponding points between frames. Additional DOFs are effective to represent fine-grained movements, but might behave as a disadvantage to extract corresponding points due to less constraints.
Third, our advanced model representation power is potential to model highly complex dynamics, \eg fluid fields, which can benefit specific applications of medicine, aerodynamics, physics, \etc
Furthermore, our model is deterministic and does not consider the motion uncertainty. Thus, a future direction is to develop a generative model on dynamics, which synthesizes diverse dynamics based on the same set of point trajectories.

\section*{Acknowledgement}
We sincerely thank Shaofei Wang for fruitful suggestions, discussions, and other helps. 
This project is partially supported by the SNSF grant 200021 204840.

\newpage

{
    \small
    \bibliographystyle{ieeenat_fullname}
    \bibliography{main}

\begin{thebibliography}{75}
\providecommand{\natexlab}[1]{#1}
\providecommand{\url}[1]{\texttt{#1}}
\expandafter\ifx\csname urlstyle\endcsname\relax
  \providecommand{\doi}[1]{doi: #1}\else
  \providecommand{\doi}{doi: \begingroup \urlstyle{rm}\Url}\fi

\bibitem[flu()]{fluidsim}
Coding adventure: Simulating fluids.
\newblock \url{https://github.com/SebLague/Fluid-Sim}.

\bibitem[Behl et~al.(2019)Behl, Paschalidou, Donn{\'e}, and Geiger]{behl2019pointflownet}
Aseem Behl, Despoina Paschalidou, Simon Donn{\'e}, and Andreas Geiger.
\newblock Pointflownet: Learning representations for rigid motion estimation from point clouds.
\newblock In \emph{CVPR}, 2019.

\bibitem[Bogo et~al.(2014)Bogo, Romero, Loper, and Black]{bogo2014faust}
Federica Bogo, Javier Romero, Matthew Loper, and Michael~J Black.
\newblock Faust: Dataset and evaluation for 3d mesh registration.
\newblock In \emph{CVPR}, 2014.

\bibitem[Bogo et~al.(2017)Bogo, Romero, Pons-Moll, and Black]{dfaust:CVPR:2017}
Federica Bogo, Javier Romero, Gerard Pons-Moll, and Michael~J. Black.
\newblock Dynamic {FAUST}: {R}egistering human bodies in motion.
\newblock In \emph{CVPR}, 2017.

\bibitem[Brox et~al.(2004)Brox, Bruhn, Papenberg, and Weickert]{brox2004high}
Thomas Brox, Andr{\'e}s Bruhn, Nils Papenberg, and Joachim Weickert.
\newblock High accuracy optical flow estimation based on a theory for warping.
\newblock In \emph{ECCV}, 2004.

\bibitem[Campbell et~al.(2020)Campbell, Liu, and Gould]{campbell2020solving}
Dylan Campbell, Liu Liu, and Stephen Gould.
\newblock Solving the blind perspective-n-point problem end-to-end with robust differentiable geometric optimization.
\newblock In \emph{ECCV}, 2020.

\bibitem[Chafa{\i} et~al.(2009)Chafa{\i}, Chaf{\"a}, Gu{\'e}don, Lecue, and Pajor]{chafai2009singular}
Djalil Chafa{\i}, Djalil Chaf{\"a}, Olivier Gu{\'e}don, Guillaume Lecue, and Alain Pajor.
\newblock Singular values of random matrices.
\newblock \emph{Lecture Notes}, 2009.

\bibitem[Charbonnier et~al.(1994)Charbonnier, Blanc-Feraud, Aubert, and Barlaud]{charbonnier1994two}
Pierre Charbonnier, Laure Blanc-Feraud, Gilles Aubert, and Michel Barlaud.
\newblock Two deterministic half-quadratic regularization algorithms for computed imaging.
\newblock In \emph{ICIP}, 1994.

\bibitem[Chen et~al.(2018)Chen, Rubanova, Bettencourt, and Duvenaud]{chen2018neural}
Ricky~TQ Chen, Yulia Rubanova, Jesse Bettencourt, and David~K Duvenaud.
\newblock Neural ordinary differential equations.
\newblock \emph{NeurIPS}, 2018.

\bibitem[Chen et~al.(2021)Chen, Zheng, Black, Hilliges, and Geiger]{chen2021snarf}
Xu Chen, Yufeng Zheng, Michael~J Black, Otmar Hilliges, and Andreas Geiger.
\newblock Snarf: Differentiable forward skinning for animating non-rigid neural implicit shapes.
\newblock In \emph{ICCV}, 2021.

\bibitem[Deng et~al.(2022)Deng, Yao, Dyke, and Zhang]{deng2022survey}
Bailin Deng, Yuxin Yao, Roberto~M Dyke, and Juyong Zhang.
\newblock A survey of non-rigid 3d registration.
\newblock In \emph{Computer Graphics Forum}, 2022.

\bibitem[Dosovitskiy et~al.(2015)Dosovitskiy, Fischer, Ilg, Hausser, Hazirbas, Golkov, Van Der~Smagt, Cremers, and Brox]{dosovitskiy2015flownet}
Alexey Dosovitskiy, Philipp Fischer, Eddy Ilg, Philip Hausser, Caner Hazirbas, Vladimir Golkov, Patrick Van Der~Smagt, Daniel Cremers, and Thomas Brox.
\newblock Flownet: Learning optical flow with convolutional networks.
\newblock In \emph{ICCV}, 2015.

\bibitem[Dugas et~al.(2000)Dugas, Bengio, B{\'e}lisle, Nadeau, and Garcia]{dugas2000incorporating}
Charles Dugas, Yoshua Bengio, Fran{\c{c}}ois B{\'e}lisle, Claude Nadeau, and Ren{\'e} Garcia.
\newblock Incorporating second-order functional knowledge for better option pricing.
\newblock \emph{NeurIPS}, 2000.

\bibitem[Fan et~al.(2017)Fan, Su, and Guibas]{fan2017point}
Haoqiang Fan, Hao Su, and Leonidas~J Guibas.
\newblock A point set generation network for 3d object reconstruction from a single image.
\newblock In \emph{CVPR}, 2017.

\bibitem[Gharaee et~al.(2023)Gharaee, Lawin, and Forss{\'e}n]{gharaee2023self}
Zahra Gharaee, Felix~J{\"a}remo Lawin, and Per-Erik Forss{\'e}n.
\newblock Self-supervised learning of object pose estimation using keypoint prediction.
\newblock In \emph{ICLR}, 2023.

\bibitem[Goel et~al.(2020)Goel, Kanazawa, and Malik]{goel2020shape}
Shubham Goel, Angjoo Kanazawa, and Jitendra Malik.
\newblock Shape and viewpoint without keypoints.
\newblock In \emph{ECCV}, 2020.

\bibitem[Groueix et~al.(2018)Groueix, Fisher, Kim, Russell, and Aubry]{groueix20183d}
Thibault Groueix, Matthew Fisher, Vladimir~G Kim, Bryan~C Russell, and Mathieu Aubry.
\newblock 3d-coded: {3D} correspondences by deep deformation.
\newblock In \emph{ECCV}, 2018.

\bibitem[Hesse et~al.(2019)Hesse, Pujades, Black, Arens, Hofmann, and Schroeder]{hesse2019learning}
Nikolas Hesse, Sergi Pujades, Michael~J Black, Michael Arens, Ulrich~G Hofmann, and A~Sebastian Schroeder.
\newblock Learning and tracking the 3d body shape of freely moving infants from rgb-d sequences.
\newblock \emph{PAMI}, 2019.

\bibitem[Higham(2002)]{higham2002accuracy}
Nicholas~J Higham.
\newblock \emph{Accuracy and stability of numerical algorithms}.
\newblock 2002.

\bibitem[Jaderberg et~al.(2015)Jaderberg, Simonyan, Zisserman, et~al.]{jaderberg2015spatial}
Max Jaderberg, Karen Simonyan, Andrew Zisserman, et~al.
\newblock Spatial transformer networks.
\newblock \emph{NeurIPS}, 2015.

\bibitem[Kanazawa et~al.(2018{\natexlab{a}})Kanazawa, Black, Jacobs, and Malik]{kanazawa2018end}
Angjoo Kanazawa, Michael~J Black, David~W Jacobs, and Jitendra Malik.
\newblock End-to-end recovery of human shape and pose.
\newblock In \emph{CVPR}, 2018{\natexlab{a}}.

\bibitem[Kanazawa et~al.(2018{\natexlab{b}})Kanazawa, Tulsiani, Efros, and Malik]{kanazawa2018learning}
Angjoo Kanazawa, Shubham Tulsiani, Alexei~A Efros, and Jitendra Malik.
\newblock Learning category-specific mesh reconstruction from image collections.
\newblock In \emph{ECCV}, 2018{\natexlab{b}}.

\bibitem[Karaev et~al.(2023)Karaev, Rocco, Graham, Neverova, Vedaldi, and Rupprecht]{karaev2023cotracker}
Nikita Karaev, Ignacio Rocco, Benjamin Graham, Natalia Neverova, Andrea Vedaldi, and Christian Rupprecht.
\newblock Cotracker: It is better to track together.
\newblock \emph{arXiv:2307.07635}, 2023.

\bibitem[Kingma and Ba(2014)]{kingma2014adam}
Diederik~P Kingma and Jimmy Ba.
\newblock Adam: A method for stochastic optimization.
\newblock \emph{arXiv preprint arXiv:1412.6980}, 2014.

\bibitem[Krizhevsky et~al.(2012)Krizhevsky, Sutskever, and Hinton]{krizhevsky2012imagenet}
Alex Krizhevsky, Ilya Sutskever, and Geoffrey~E Hinton.
\newblock Imagenet classification with deep convolutional neural networks.
\newblock \emph{NeurIPS}, 2012.

\bibitem[Li et~al.(2021{\natexlab{a}})Li, Pontes, and Lucey]{li2021neural}
Xueqian Li, Jhony~Kaesemodel Pontes, and Simon Lucey.
\newblock Neural scene flow prior.
\newblock In \emph{NeurIPS}, 2021{\natexlab{a}}.

\bibitem[Li and Harada(2022)]{li2022non}
Yang Li and Tatsuya Harada.
\newblock Non-rigid point cloud registration with neural deformation pyramid.
\newblock \emph{NeurIPS}, 2022.

\bibitem[Li et~al.(2021{\natexlab{b}})Li, Takehara, Taketomi, Zheng, and Nie{\ss}ner]{li20214dcomplete}
Yang Li, Hikari Takehara, Takafumi Taketomi, Bo Zheng, and Matthias Nie{\ss}ner.
\newblock 4dcomplete: Non-rigid motion estimation beyond the observable surface.
\newblock In \emph{ICCV}, 2021{\natexlab{b}}.

\bibitem[Li et~al.(2023)Li, Wang, Cole, Tucker, and Snavely]{li2023dynibar}
Zhengqi Li, Qianqian Wang, Forrester Cole, Richard Tucker, and Noah Snavely.
\newblock Dynibar: Neural dynamic image-based rendering.
\newblock In \emph{CVPR}, 2023.

\bibitem[Lombardi et~al.(2019)Lombardi, Simon, Saragih, Schwartz, Lehrmann, and Sheikh]{lombardi2019neural}
Stephen Lombardi, Tomas Simon, Jason Saragih, Gabriel Schwartz, Andreas Lehrmann, and Yaser Sheikh.
\newblock Neural volumes: Learning dynamic renderable volumes from images.
\newblock \emph{ACM Trans. Graph.}, 2019.

\bibitem[Loper et~al.(2014)Loper, Mahmood, and Black]{loper2014mosh}
Matthew Loper, Naureen Mahmood, and Michael~J Black.
\newblock Mosh: Motion and shape capture from sparse markers.
\newblock \emph{ACM Trans. Gr.}, 2014.

\bibitem[Ma et~al.(2021{\natexlab{a}})Ma, Saito, Yang, Tang, and Black]{ma2021scale}
Qianli Ma, Shunsuke Saito, Jinlong Yang, Siyu Tang, and Michael~J Black.
\newblock Scale: Modeling clothed humans with a surface codec of articulated local elements.
\newblock In \emph{CVPR}, 2021{\natexlab{a}}.

\bibitem[Ma et~al.(2021{\natexlab{b}})Ma, Yang, Tang, and Black]{ma2021power}
Qianli Ma, Jinlong Yang, Siyu Tang, and Michael~J Black.
\newblock The power of points for modeling humans in clothing.
\newblock In \emph{ICCV}, 2021{\natexlab{b}}.

\bibitem[Mahmood et~al.(2019)Mahmood, Ghorbani, Troje, Pons-Moll, and Black]{AMASS:ICCV:2019}
Naureen Mahmood, Nima Ghorbani, Nikolaus~F. Troje, Gerard Pons-Moll, and Michael~J. Black.
\newblock {AMASS}: Archive of motion capture as surface shapes.
\newblock In \emph{ICCV}, 2019.

\bibitem[Mathis et~al.(2020)Mathis, Schneider, Lauer, and Mathis]{mathis2020primer}
Alexander Mathis, Steffen Schneider, Jessy Lauer, and Mackenzie~Weygandt Mathis.
\newblock A primer on motion capture with deep learning: principles, pitfalls, and perspectives.
\newblock \emph{Neuron}, 2020.

\bibitem[Mihajlovic et~al.(2021)Mihajlovic, Zhang, Black, and Tang]{mihajlovic2021leap}
Marko Mihajlovic, Yan Zhang, Michael~J Black, and Siyu Tang.
\newblock Leap: Learning articulated occupancy of people.
\newblock In \emph{CVPR}, 2021.

\bibitem[Mihajlovic et~al.(2022)Mihajlovic, Saito, Bansal, Zollhoefer, and Tang]{COAP}
Marko Mihajlovic, Shunsuke Saito, Aayush Bansal, Michael Zollhoefer, and Siyu Tang.
\newblock {COAP}: Compositional articulated occupancy of people.
\newblock In \emph{CVPR}, 2022.

\bibitem[Mihajlovic et~al.(2024)Mihajlovic, Prokudin, Pollefeys, and Tang]{mihajlovic2023resfields}
Marko Mihajlovic, Sergey Prokudin, Marc Pollefeys, and Siyu Tang.
\newblock Resfields: Residual neural fields for spatiotemporal signals.
\newblock In \emph{ICLR}, 2024.

\bibitem[Mildenhall et~al.(2020)Mildenhall, Srinivasan, Tancik, Barron, Ramamoorthi, and Ng]{mildenhall2020nerf}
Ben Mildenhall, Pratul~P. Srinivasan, Matthew Tancik, Jonathan~T. Barron, Ravi Ramamoorthi, and Ren Ng.
\newblock Nerf: Representing scenes as neural radiance fields for view synthesis.
\newblock In \emph{ECCV}, 2020.

\bibitem[Myronenko and Song(2010)]{myronenko2010point}
Andriy Myronenko and Xubo Song.
\newblock Point set registration: Coherent point drift.
\newblock \emph{PAMI}, 2010.

\bibitem[Niemeyer et~al.(2019)Niemeyer, Mescheder, Oechsle, and Geiger]{niemeyer2019occupancy}
Michael Niemeyer, Lars Mescheder, Michael Oechsle, and Andreas Geiger.
\newblock Occupancy flow: 4d reconstruction by learning particle dynamics.
\newblock In \emph{ICCV}, 2019.

\bibitem[Novotny et~al.(2022)Novotny, Rocco, Sinha, Carlier, Kerchenbaum, Shapovalov, Smetanin, Neverova, Graham, and Vedaldi]{novotny2022keytr}
David Novotny, Ignacio Rocco, Samarth Sinha, Alexandre Carlier, Gael Kerchenbaum, Roman Shapovalov, Nikita Smetanin, Natalia Neverova, Benjamin Graham, and Andrea Vedaldi.
\newblock Keytr: keypoint transporter for 3d reconstruction of deformable objects in videos.
\newblock In \emph{CVPR}, 2022.

\bibitem[Ovsjanikov et~al.(2012)Ovsjanikov, Ben-Chen, Solomon, Butscher, and Guibas]{ovsjanikov2012functional}
Maks Ovsjanikov, Mirela Ben-Chen, Justin Solomon, Adrian Butscher, and Leonidas Guibas.
\newblock Functional maps: a flexible representation of maps between shapes.
\newblock \emph{ToG}, 2012.

\bibitem[Palafox et~al.(2021)Palafox, Bozic, Thies, Nie{\ss}ner, and Dai]{palafox2021npm}
Pablo Palafox, Aljaz Bozic, Justus Thies, Matthias Nie{\ss}ner, and Angela Dai.
\newblock Neural parametric models for 3d deformable shapes.
\newblock In \emph{ICCV}, 2021.

\bibitem[Park et~al.(2021)Park, Sinha, Barron, Bouaziz, Goldman, Seitz, and Martin-Brualla]{park2021nerfies}
Keunhong Park, Utkarsh Sinha, Jonathan~T. Barron, Sofien Bouaziz, Dan~B Goldman, Steven~M. Seitz, and Ricardo Martin-Brualla.
\newblock Nerfies: Deformable neural radiance fields.
\newblock \emph{ICCV}, 2021.

\bibitem[Paszke et~al.(2019)Paszke, Gross, Massa, Lerer, Bradbury, Chanan, Killeen, Lin, Gimelshein, Antiga, et~al.]{paszke2019pytorch}
Adam Paszke, Sam Gross, Francisco Massa, Adam Lerer, James Bradbury, Gregory Chanan, Trevor Killeen, Zeming Lin, Natalia Gimelshein, Luca Antiga, et~al.
\newblock Pytorch: An imperative style, high-performance deep learning library.
\newblock In \emph{NeurIPS}, 2019.

\bibitem[Pavlakos et~al.(2019)Pavlakos, Choutas, Ghorbani, Bolkart, Osman, Tzionas, and Black]{SMPL-X:2019}
Georgios Pavlakos, Vasileios Choutas, Nima Ghorbani, Timo Bolkart, Ahmed A.~A. Osman, Dimitrios Tzionas, and Michael~J. Black.
\newblock Expressive body capture: {3D} hands, face, and body from a single image.
\newblock In \emph{CVPR}, 2019.

\bibitem[Poppe(2007)]{poppe2007vision}
Ronald Poppe.
\newblock Vision-based human motion analysis: An overview.
\newblock \emph{Computer vision and image understanding}, 2007.

\bibitem[Prokudin et~al.(2023)Prokudin, Ma, Raafat, Valentin, and Tang]{prokudin2023dynamic}
Sergey Prokudin, Qianli Ma, Maxime Raafat, Julien Valentin, and Siyu Tang.
\newblock Dynamic point fields.
\newblock In \emph{ICCV}, 2023.

\bibitem[Pumarola et~al.(2021)Pumarola, Corona, Pons-Moll, and Moreno-Noguer]{pumarola2021d}
Albert Pumarola, Enric Corona, Gerard Pons-Moll, and Francesc Moreno-Noguer.
\newblock D-nerf: Neural radiance fields for dynamic scenes.
\newblock In \emph{CVPR}, 2021.

\bibitem[Qian et~al.(2024)Qian, Wang, Mihajlovic, Geiger, and Tang]{qian20233dgsavatar}
Zhiyin Qian, Shaofei Wang, Marko Mihajlovic, Andreas Geiger, and Siyu Tang.
\newblock 3dgs-avatar: Animatable avatars via deformable 3d gaussian splatting.
\newblock In \emph{CVPR}, 2024.

\bibitem[Rempe et~al.(2021)Rempe, Birdal, Hertzmann, Yang, Sridhar, and Guibas]{rempe2021humor}
Davis Rempe, Tolga Birdal, Aaron Hertzmann, Jimei Yang, Srinath Sridhar, and Leonidas~J. Guibas.
\newblock Humor: 3d human motion model for robust pose estimation.
\newblock In \emph{ICCV}, 2021.

\bibitem[Saito et~al.(2021)Saito, Yang, Ma, and Black]{SCANimate:CVPR:2021}
Shunsuke Saito, Jinlong Yang, Qianli Ma, and Michael~J Black.
\newblock {SCANimate}: Weakly supervised learning of skinned clothed avatar networks.
\newblock In \emph{CVPR}, 2021.

\bibitem[Sitzmann et~al.(2020{\natexlab{a}})Sitzmann, Martel, Bergman, Lindell, and Wetzstein]{sitzmann2020implicit}
Vincent Sitzmann, Julien Martel, Alexander Bergman, David Lindell, and Gordon Wetzstein.
\newblock Implicit neural representations with periodic activation functions.
\newblock \emph{NeurIPS}, 2020{\natexlab{a}}.

\bibitem[Sitzmann et~al.(2020{\natexlab{b}})Sitzmann, Martel, Bergman, Lindell, and Wetzstein]{sitzmann2019siren}
Vincent Sitzmann, Julien~N.P. Martel, Alexander~W. Bergman, David~B. Lindell, and Gordon Wetzstein.
\newblock Implicit neural representations with periodic activation functions.
\newblock In \emph{NeurIPS}, 2020{\natexlab{b}}.

\bibitem[Spencer(2004)]{spencer2004continuum}
Anthony James~Merrill Spencer.
\newblock \emph{Continuum mechanics}.
\newblock 2004.

\bibitem[Tewari et~al.(2022)Tewari, Thies, Mildenhall, Srinivasan, Tretschk, Yifan, Lassner, Sitzmann, Martin-Brualla, Lombardi, et~al.]{tewari2022advances}
Ayush Tewari, Justus Thies, Ben Mildenhall, Pratul Srinivasan, Edgar Tretschk, Wang Yifan, Christoph Lassner, Vincent Sitzmann, Ricardo Martin-Brualla, Stephen Lombardi, et~al.
\newblock Advances in neural rendering.
\newblock In \emph{Computer Graphics Forum}, 2022.

\bibitem[Tsin and Kanade(2004)]{tsin2004correlation}
Yanghai Tsin and Takeo Kanade.
\newblock A correlation-based approach to robust point set registration.
\newblock In \emph{ECCV}, 2004.

\bibitem[Wang et~al.(2021{\natexlab{a}})Wang, Eckart, Lucey, and Gallo]{wang2021neural}
Chaoyang Wang, Ben Eckart, Simon Lucey, and Orazio Gallo.
\newblock Neural trajectory fields for dynamic novel view synthesis.
\newblock \emph{arXiv preprint arXiv:2105.05994}, 2021{\natexlab{a}}.

\bibitem[Wang et~al.(2023)Wang, Chang, Cai, Li, Hariharan, Holynski, and Snavely]{wang2023omnimotion}
Qianqian Wang, Yen-Yu Chang, Ruojin Cai, Zhengqi Li, Bharath Hariharan, Aleksander Holynski, and Noah Snavely.
\newblock Tracking everything everywhere all at once.
\newblock In \emph{ICCV}, 2023.

\bibitem[Wang et~al.(2021{\natexlab{b}})Wang, Mihajlovic, Ma, Geiger, and Tang]{wang2021metaavatar}
Shaofei Wang, Marko Mihajlovic, Qianli Ma, Andreas Geiger, and Siyu Tang.
\newblock Metaavatar: Learning animatable clothed human models from few depth images.
\newblock In \emph{NeurIPS}, 2021{\natexlab{b}}.

\bibitem[Wang et~al.(2022)Wang, Schwarz, Geiger, and Tang]{ARAH:ECCV:2022}
Shaofei Wang, Katja Schwarz, Andreas Geiger, and Siyu Tang.
\newblock Arah: Animatable volume rendering of articulated human sdfs.
\newblock In \emph{ECCV}, 2022.

\bibitem[Weisstein()]{matnorm}
Eric~W. Weisstein.
\newblock "matrix norm." from mathworld--a wolfram web resource.
\newblock https://mathworld.wolfram.com/MatrixNorm.html.

\bibitem[Wen et~al.(2023)Wen, Tremblay, Blukis, Tyree, Muller, Evans, Fox, Kautz, and Birchfield]{wen2023bundlesdf}
Bowen Wen, Jonathan Tremblay, Valts Blukis, Stephen Tyree, Thomas Muller, Alex Evans, Dieter Fox, Jan Kautz, and Stan Birchfield.
\newblock Bundlesdf: Neural 6-dof tracking and 3d reconstruction of unknown objects.
\newblock In \emph{CVPR}, 2023.

\bibitem[Weng et~al.(2022)Weng, Curless, Srinivasan, Barron, and Kemelmacher-Shlizerman]{weng2022humannerf}
Chung-Yi Weng, Brian Curless, Pratul~P Srinivasan, Jonathan~T Barron, and Ira Kemelmacher-Shlizerman.
\newblock Humannerf: Free-viewpoint rendering of moving people from monocular video.
\newblock In \emph{CVPR}, 2022.

\bibitem[Xu et~al.(2020)Xu, Bazavan, Zanfir, Freeman, Sukthankar, and Sminchisescu]{xu2020ghum}
Hongyi Xu, Eduard~Gabriel Bazavan, Andrei Zanfir, William~T Freeman, Rahul Sukthankar, and Cristian Sminchisescu.
\newblock {GHUM} \& {GHUML}: Generative {3D} human shape and articulated pose models.
\newblock In \emph{CVPR}, 2020.

\bibitem[Yang et~al.(2022)Yang, Vo, Neverova, Ramanan, Vedaldi, and Joo]{yang2022banmo}
Gengshan Yang, Minh Vo, Natalia Neverova, Deva Ramanan, Andrea Vedaldi, and Hanbyul Joo.
\newblock Banmo: Building animatable 3d neural models from many casual videos.
\newblock In \emph{CVPR}, 2022.

\bibitem[Yuan et~al.(2021)Yuan, Wei, Simon, Kitani, and Saragih]{yuan2021simpoe}
Ye Yuan, Shih-En Wei, Tomas Simon, Kris Kitani, and Jason Saragih.
\newblock Simpoe: Simulated character control for 3d human pose estimation.
\newblock In \emph{CVPR}, 2021.

\bibitem[Zhai et~al.(2021)Zhai, Xiang, Lv, and Kong]{zhai2021optical}
Mingliang Zhai, Xuezhi Xiang, Ning Lv, and Xiangdong Kong.
\newblock Optical flow and scene flow estimation: A survey.
\newblock \emph{Pattern Recognition}, 2021.

\bibitem[Zhang et~al.(2021{\natexlab{a}})Zhang, Zhang, Bogo, Pollefeys, and Tang]{zhang2021learning}
Siwei Zhang, Yan Zhang, Federica Bogo, Marc Pollefeys, and Siyu Tang.
\newblock Learning motion priors for 4d human body capture in 3d scenes.
\newblock In \emph{ICCV}, 2021{\natexlab{a}}.

\bibitem[Zhang et~al.(2021{\natexlab{b}})Zhang, Black, and Tang]{zhang2021we}
Yan Zhang, Michael~J Black, and Siyu Tang.
\newblock We are more than our joints: Predicting how 3d bodies move.
\newblock In \emph{CVPR}, 2021{\natexlab{b}}.

\bibitem[Zheng et~al.(2015)Zheng, Yang, Liu, Liang, and Li]{zheng2015improving}
Hao Zheng, Zhanlei Yang, Wenju Liu, Jizhong Liang, and Yanpeng Li.
\newblock Improving deep neural networks using softplus units.
\newblock In \emph{IJCNN}, 2015.

\bibitem[Zhou et~al.(2019)Zhou, Barnes, Lu, Yang, and Li]{zhou2019continuity}
Yi Zhou, Connelly Barnes, Jingwan Lu, Jimei Yang, and Hao Li.
\newblock On the continuity of rotation representations in neural networks.
\newblock In \emph{CVPR}, 2019.

\bibitem[Zimmer et~al.(2011)Zimmer, Bruhn, and Weickert]{zimmer2011optic}
Henning Zimmer, Andr{\'e}s Bruhn, and Joachim Weickert.
\newblock Optic flow in harmony.
\newblock \emph{IJCV}, 2011.

\bibitem[Zuffi et~al.(2017)Zuffi, Kanazawa, Jacobs, and Black]{Zuffi:CVPR:2017}
Silvia Zuffi, Angjoo Kanazawa, David Jacobs, and Michael~J. Black.
\newblock {3D} menagerie: Modeling the {3D} shape and pose of animals.
\newblock In \emph{CVPR}, 2017.

\end{thebibliography}
}

\clearpage

\setcounter{table}{0}
\setcounter{figure}{0}
\renewcommand{\thetable}{A\arabic{table}}
\renewcommand{\thefigure}{A\arabic{figure}}

\maketitlesupplementary

{
  \hypersetup{linkcolor=blue}
  \tableofcontents
}

\newpage

\section{Discussions on the Motion Model Bound} 
\label{sec:supp:theorem}

The singular values of the 3D linear transformation matrix carry essential physical meanings.
Via singular value decomposition, the motion can be regarded as a consecutive operations of rotating to a new coordinate frame, performing scaling in each dimension based on the singular values, and rotating back to the original coordinate frame.
Therefore, the upper bound of such deformation is indicated by the largest singular value.

In the main paper, we demonstrate the representation power of DPF~\cite{prokudin2023dynamic} is bounded, based on Eq.~\eqref{eq:dpf_jacob2} and Eq.~\eqref{eq:dpf_jacob_bound}.
Here we present more details in terms of a theorem with proof.

\begin{theorem}
Provided 
\begin{equation}
    \nabla {\bm u} =  {\bm W}_n  \left(\prod_{i=0}^{n-1}{\bm W}_i \circ \varphi_{i}({\bm x}) \right) 
\end{equation}
and
\begin{equation}
    \varphi_{i} = \cos({\bm W}_i {\bm x}_i + {\bm b}_i),
\end{equation}
in which $\circ$ is the composition of element-wise multiplication and broadcasting a vector to a matrix, as well as $i=\{0,1,...,n-1\}$, 
the bound of the spectral norm of $\nabla {\bm u}$ is given by

\begin{equation}
    \|\nabla {\bm u}\|_{2} \leq d^{n} \cdot \prod_{i=0}^n \|{\bm W}_i\|_{2},
\end{equation}
in which $n$ and $d$ denote the number of hidden layers and the dimension of hidden layers, respectively.

\end{theorem}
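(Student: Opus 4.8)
The plan is to read the given expression for $\nabla{\bm u}$ as an ordered matrix product and then repeatedly invoke submultiplicativity of the spectral norm, $\|{\bm M}_1{\bm M}_2\|_2\le\|{\bm M}_1\|_2\|{\bm M}_2\|_2$. First I would unfold the broadcast Hadamard operation: since $\circ$ multiplies $\varphi_i({\bm x})$ element-wise into ${\bm W}_i$ after broadcasting the vector to a matrix, each factor ${\bm W}_i\circ\varphi_i$ is merely a row- (or column-) scaling of ${\bm W}_i$, i.e.\ it equals $\mathrm{diag}(\varphi_i({\bm x}))\,{\bm W}_i$ up to the broadcast convention. This exhibits $\nabla{\bm u}$ as an honest product of $n+1$ matrices, so that $\|\nabla{\bm u}\|_2\le\|{\bm W}_n\|_2\prod_{i=0}^{n-1}\|{\bm W}_i\circ\varphi_i\|_2$.

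The second step is to bound each mixed factor $\|{\bm W}_i\circ\varphi_i\|_2$, which is the crux because Hadamard/broadcast products do not interact cleanly with the spectral norm. The route that reproduces the stated constant is to separate the matrix from the broadcast vector: the broadcast of $\varphi_i$ is a rank-one, all-columns-equal matrix $\varphi_i\mathbf{1}^{\top}$ whose spectral norm is $\sqrt d\,\|\varphi_i\|_2$, which yields $\|{\bm W}_i\circ\varphi_i\|_2\le\sqrt d\,\|{\bm W}_i\|_2\,\|\varphi_i({\bm x})\|_2$. Chaining this over the $n$ inner factors gives the first displayed inequality $\|\nabla{\bm u}\|_2\le d^{n/2}\big(\prod_{i=0}^n\|{\bm W}_i\|_2\big)\big(\prod_{i=0}^{n-1}\|\varphi_i({\bm x})\|_2\big)$.

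Finally I would discharge the cosine terms. Since $\varphi_i=\cos(\cdot)$ acts entrywise and $|\cos|\le1$, each of the $d$ coordinates of $\varphi_i({\bm x})$ is bounded by $1$ in magnitude, hence $\|\varphi_i({\bm x})\|_2\le\sqrt d$ and $\prod_{i=0}^{n-1}\|\varphi_i\|_2\le d^{n/2}$. Substituting this collapses the two $d^{n/2}$ contributions into $d^n$ and delivers the claim $\|\nabla{\bm u}\|_2\le d^n\prod_{i=0}^n\|{\bm W}_i\|_2$.

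The main obstacle, and the only non-mechanical part, is the second step: making the bound on the element-wise factor rigorous, since treating $\circ$ as if it were ordinary submultiplicative matrix multiplication is not automatically justified. It is worth noting that the diagonal-scaling identity ${\bm W}_i\circ\varphi_i=\mathrm{diag}(\varphi_i){\bm W}_i$ together with $\|\mathrm{diag}(\varphi_i)\|_2=\|\varphi_i\|_\infty\le1$ would in fact give the strictly tighter, dimension-free bound $\prod_{i=0}^n\|{\bm W}_i\|_2$; the $d^n$ factor in the statement is the price paid for routing the argument through Euclidean vector norms rather than the sup norm. Either way the qualitative conclusion the theorem is after---that $\|\nabla{\bm u}\|_2$ is finitely bounded by the product of the layer spectral norms, and hence SIREN's local representation power is constrained---follows immediately.
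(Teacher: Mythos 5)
Your proof follows essentially the same route as the paper's: submultiplicativity of the spectral norm across the $n+1$ factors, the broadcast Hadamard factor bounded via the rank-one matrix $\varphi_i\mathbf{1}^{\top}$ with spectral norm $\sqrt{d}\,\|\varphi_i({\bm x})\|_2$, and $\|\varphi_i({\bm x})\|_2\le\sqrt{d}$ from $|\cos|\le 1$, which combines to the stated $d^{n}\prod_{i=0}^{n}\|{\bm W}_i\|_2$. Your closing observation that the identity ${\bm W}_i\circ\varphi_i=\mathrm{diag}(\varphi_i({\bm x}))\,{\bm W}_i$ with $\|\mathrm{diag}(\varphi_i)\|_2=\|\varphi_i\|_\infty\le 1$ would yield the strictly tighter, dimension-free bound $\prod_{i=0}^{n}\|{\bm W}_i\|_2$ is correct and goes beyond the paper, but the main argument is the same one the paper gives.
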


\begin{proof}
Referring to the matrix norm properties~\cite{matnorm,higham2002accuracy},
we have the following inequalities on the spectral norm, \ie
\begin{align}
\label{eq:bound_proof}
    \|\nabla {\bm u}\|_2  &=  \left\|{\bm W}_n  \left(\prod_{i=0}^{n-1}{\bm W}_i \circ \varphi_{i}({\bm x}) \right) \right\|_2 \\
    & \leq \| {\bm W}_n \|_2 \cdot \prod_{i=0}^{n-1} \| {\bm W}_i \circ \varphi_{i}({\bm x}) \|_2 \\
    & \leq \| {\bm W}_n \|_2 \left( \prod_{i=0}^{n-1} \| {\bm W}_i \|_2 \right)  \left( \prod_{i=0}^{n-1} \|\hat{\varphi}_{i}({\bm x}) \|_2  \right) \\
    & \leq \left( \prod_{i=0}^{n} \| {\bm W}_i \|_2 \right)  \left( \prod_{i=0}^{n-1} \|\hat{\varphi_{i}}({\bm x}) \|_2  \right),
\end{align}
in which $\hat{\varphi}_i$ is the matrix with the same column $\varphi_i$, and has the shape of $\mathbb{R}^{d \times q}$. 
This corresponds to the shape of ${\bm W}_i$, and hence it has $q=3$ at the input layer and $q=d$ in the hidden layers. Therefore, we assume $q=d$ in the following derivations to obtain the upper bound.

Note the rank of the matrix $\hat{\varphi}_i$ is 1, and we can have 
\begin{align}
 \|\hat{\varphi_{i}}({\bm x}) \|_2 = \|\hat{\varphi_{i}}({\bm x}) \|_F & = \sqrt{d} \cdot \|\varphi_{i}({\bm x}) \|_2 \leq d,
\end{align}
according to $\|\varphi_{i}\|_{\infty} \leq 1$.
Thus, we can derive
\begin{equation}
    \|\nabla {\bm u}\|_{2} \leq d^{n} \cdot \prod_{i=0}^n \|{\bm W}_i\|_{2}.
\end{equation}
\end{proof}

Although the constant factor $d^{n}$ is large, it is only reached when every $|\varphi_i|$ is equal to 1, which is implausible in practice. In addition, the entries in ${\bm W}_i$ are from the uniform distribution with a tiny range around 0~\cite{sitzmann2020implicit}, which further constrains the spectral norm of the Jacobian matrix.
Due to the challenges of spectral analysis on high-dimensional random matrices, we can look into the degenerated 1D case, which is given by
\begin{equation}
    \frac{du}{dx} = w_n \prod_{i=0}^{n-1} w_i \cos(w_i x + b_i).
\end{equation}
In this case, we can easily derive
\begin{align}
    \left|\frac{du}{dx} \right| = \left|w_n \prod_{i=0}^{n-1} w_i \cos(w_i x + b_i) \right| \leq \prod_{i=0}^{n} |w_i|,
\end{align}
which indicates that the motion complexity is heavily bounded.

\paragraph{A statistical perspective.} 
The boundedness can be also investigated from a statistical perspective. Starting with
\begin{equation}
    \|\nabla {\bm u}\|_2 \leq \| {\bm W}_n \|_2 \cdot \prod_{i=0}^{n-1} \| {\bm W}_i \circ \varphi_{i}({\bm x}) \|_2
\end{equation}
that is from Eq.~\eqref{eq:bound_proof}, we can reason the entries of ${\bm W}_i \circ \varphi_{i}({\bm x})$ are converging to the standard normal distribution if the model weights are initialized as in SIREN~\cite{sitzmann2020implicit}. 
Specifically, the entries of ${\bm W}_i$ are from the defined uniform distribution, $\varphi_i$ is from the arcsine distribution, since the cosine activation function is equivalent to the phase-shifted sine activation function and the bias does not modify the distribution for high enough frequency~\cite[Theorem 1.8]{sitzmann2020implicit}.
According to~\cite[Theorem 2.5]{chafai2009singular}, the largest singular value of ${\bm A}_i = {\bm W}_i \circ \varphi_{i}({\bm x})$ is bounded, having 
\begin{equation}
    \lim_{d \to \infty} \sup \lambda_1(d^{-1}{\bm A}_i{\bm A}^T_i) \leq 4,
\end{equation}
in which $d$ is the hidden dimension and $\lambda_1$ is the largest eigenvalue.
Therefore, their compositions with $i=0,\dots,n-1$ are also bounded.

\section{Additional Discussions on DOMA}

\subsection{The Network}

\paragraph{The model architecture.}
The DOMA models can be visualized in Fig.~\ref{fig:supp:doma}. In this case, the SIREN~\cite{sitzmann2020implicit} network contains one input layer, one output layer, and two hidden layers.
In the case of the `SE(3)' and `scaled SE(3)' models, 6D continuous rotation representations~\cite{zhou2019continuity} are produced by the output layer, which are then orthogonalized to rotation matrices.

\begin{figure}
    \centering
    \includegraphics[width=\linewidth]{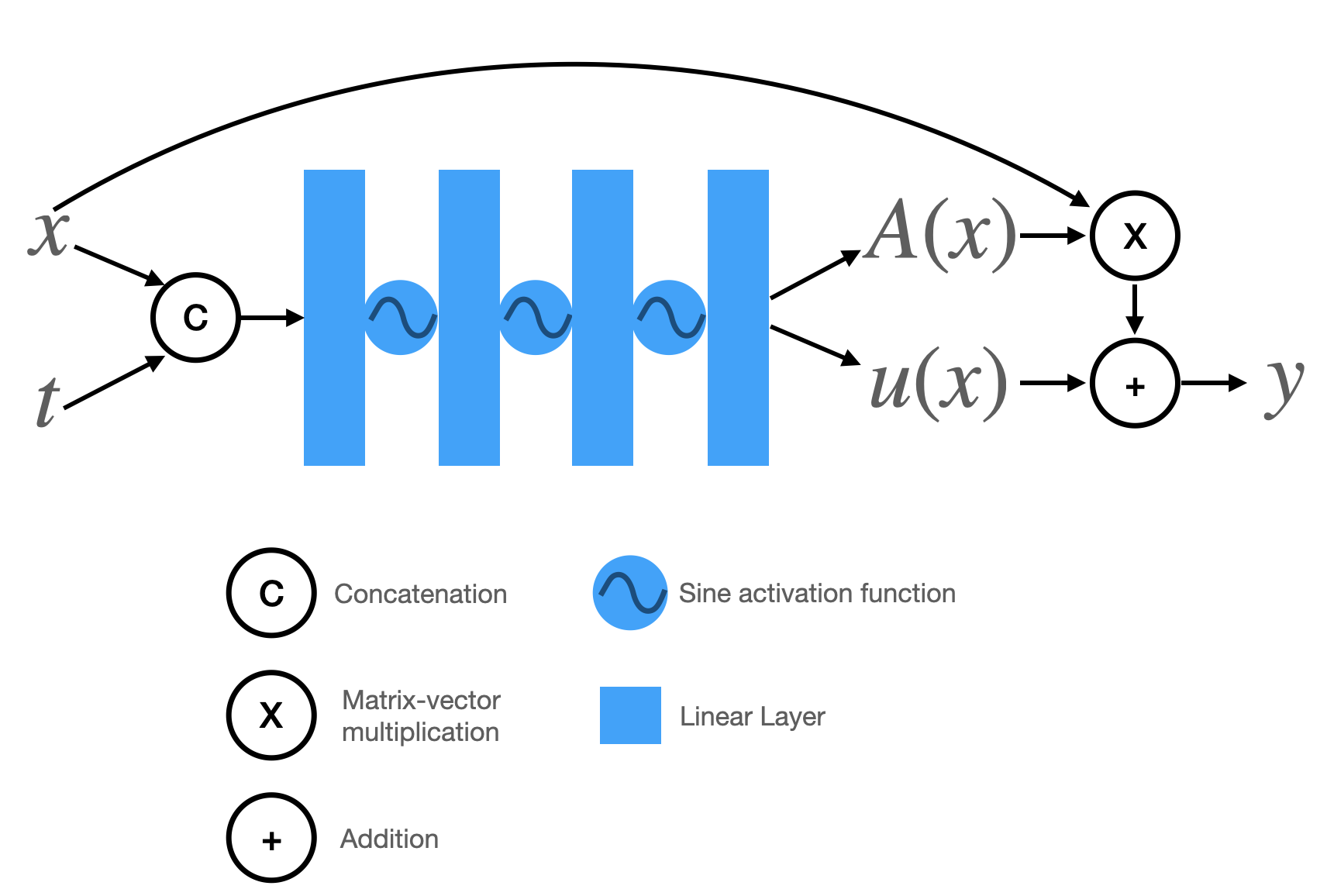}
    \caption{Illustration of the DOMA model architecture. The SIREN layers~\cite{sitzmann2020implicit} produce an affine transformation, which maps the point from ${\bm x}$ to ${\bm y}$ at time $t$.}
    \label{fig:supp:doma}
\end{figure}

\paragraph{The model sizes.}
Since the 1D temporal dimension is incorporated in the input layer, DOMA models have $\mathcal{O}(1)$ complexity w.r.t. the motion sequence length. The sizes of different models are summarized in Tab.~\ref{tab:supp:complexity}.

\begin{table}
    \centering
    \footnotesize
    \setlength{\tabcolsep}{4pt}
    \scalebox{0.90}{
    \begin{tabular}{ccccc}
    \toprule
         \textit{DPF} & \textit{-Trans} & \textit{-SE(3)} & \textit{-Scaled SE(3)} & \textit{-Affinity}  \\
         $(6d+nd^2)(T-1)$ & $7d+nd^2$ & $13d+nd^2$ & $14d+nd^2$ & $16d+nd^2$ \\
        \bottomrule
    \end{tabular}}
    \caption{The number of parameters in the employed SIREN network. $T$, $n$, $d$ denote the number of frames in the sequence, the number of network hidden layers, and the hidden dimension, respectively. The suffixes denote different versions of DOMA.}
    \label{tab:supp:complexity}
\end{table}

\subsection{Additional Discussions on Novelties}

Modeling the deformation field or the motion field is not a new task.
Instead, various methods have been developed within respective tasks, such as geometry deformation, neural rendering, dynamic scene reconstruction, avatar creation, etc.
Their exploited motion methods are diverse in terms of the neural architecture, positional encoding, underlying deformation models, and so on.
However, an important aspect is often overlooked: the motion field should be spatiotemporally regularized by nature.
To fill this gap,
we leverage the SIREN~\cite{sitzmann2019siren} network, and extend the start-of-the-art work DPF~\cite{prokudin2023dynamic} to a multi-frame smooth affinity field model.
By introducing additional DOFs at the output layer, we find the model representation power is improved in a different way from enlarging the model hidden layers, and come up with a solution to increase the model capacity while retaining the model size. 
Moreover, we introduce a smoothness regularization term to overcome overfitting, which does not assume the underlying motion is \eg rigid like in~\cite{park2021nerfies}.
The effectiveness of DOMA is demonstrated with experiments in Sec.~\ref{sec:exp} and the supp. mat.

The advantage of DOMA is more obvious when the ground truth motion is more complex.
An example is modeling the loose long skirt motion.
As shown in Tab.~\ref{tab:supp:persubject}, DOMA-Affinity is consistently superior to DPF on the `felice' sequences of Resynth. 
Another example is modeling the fluid dynamics, which is investigated in Sec.~\ref{sec:supp:fluid}. We can see DOMA-Affinity outperforms DPF significantly. 
Since DPF only models deformations between the canonical frame and frame $t$, it cannot ensure the temporal smoothness between $t$ and $t+1$.

Despite aiming at different tasks, our work is also related to object shape and view recovery from images.
Kanazawa \etal~\cite{kanazawa2018learning} propose a framework to learn from an annotated image collection, and recover the 3D shape in a canonical frame, the camera pose, and the texture of an object from a single image. 
The 3D object shape is parameterized by a learned mean shape and per-instance predicted deformation.
To encourage additional properties such as surface smoothness and regularized deformation, generic priors are leveraged in the training loss.
Goel \etal~\cite{goel2020shape} extend this framework to learn from an image collection without annotations of the keypoints and the camera. 
To further improve the performance, Gharaee \etal~\cite{gharaee2023self} propose to predict a set of keypoints to represent the shape, corresponding to positions on the category-specific mean shape in 3D. Afterwards, the camera pose is estimated by a robust PnP network~\cite{campbell2020solving}. 
These solutions of decoupling the instance-level shape into the mean shape and the deformation also inspire us how to model motions.
Furthermore, we are encouraged by these works to reconstruct dynamic scenes from multiview videos as future work.

\section{Experiment Details}

\subsection{Additional Presentations on Point Motion Prediction (Sec.~\ref{sec:exp1})}
\label{sec:supp:exp_prediction}
We leverage and modify the codebase of ResFields~\cite{mihajlovic2023resfields} for the implementations of baselines MLP-ReLU and DCT-NeRF~\cite{wang2021neural}.

\subsubsection{Dataset}
The 7 sequences from DeformingThings4D~\cite{li20214dcomplete} are listed in Tab.~\ref{tab:supp:dft4d}.
For each sequence, we extract the first 100 frames and regard the first frame as the canonical frame.

\begin{table}[t!]
    \centering
    \begin{tabular}{l}
    \toprule
           bear3EP\_Agression \\
           demon\_JazzDancing \\
           dragonOLO\_act25 \\
           michelle\_StepHipHopDance\\
           mutant\_Defeated\\
           tigerD8H\_Swim17\\
           vampire\_Breakdance1990\\
           vanguard\_JoyfulJump\\
        \bottomrule
    \end{tabular}
    \caption{The leveraged DeformingThings4D~\cite{li20214dcomplete} sequences in Sec.~\ref{sec:exp1}.}
    \label{tab:supp:dft4d}
\end{table}

\subsubsection{Baselines}

\paragraph{MLP-ReLU and MLP-ReLU PE.6.}
MLPs with ReLU~\cite{krizhevsky2012imagenet} are frequently used to warp points in existing works. 
In our experiment, the architecture contains 6 hidden layers of 128 hidden dimensions.
In Tab.~\ref{tab:df4d}, the Fourier positional encoding~\cite{mildenhall2020nerf} is not used in `MLP-ReLU', but is applied in `MLP-ReLU PE.6' with 6-level resolutions.

\paragraph{DCT-NeRF~\cite{wang2021neural}.}
The Fourier positional encoding is not applied.
Rather than outputting the target point location ${\bm y}$, this baseline method produces the coefficients of a DCT basis that is jointly learned from the data.
Similar technology is also employed in~\cite{li2023dynibar}.

\paragraph{BANMO~\cite{yang2022banmo}.}
BANMO is a solution to reconstruct the avatar of a generic object, \eg cat, from a monocular video. 
The avatar bones are modelled by a set of 3D Gaussians, and the skinning weight is a combination of a Gaussian-based weighting function and a neural network. The 3D location of a query point is encoded by Fourier encoding~\cite{mildenhall2020nerf}. The rest pose code is derived by a linear layer, and the pose code is derived by the Fourier encoding of the frame and a linear layer.
In our experiment, we adopt its avatar deformation module into our setting and use the hyper-parameters as in the original paper~\cite{yang2022banmo}.
Provided a set of training point trajectories, we optimize the 3D Gaussians and the relevant networks as in~\cite{yang2022banmo}. 
During testing, we animate the testing points in the canonical frame to produce the trajectories, based on the learned Gaussians and networks.

\paragraph{BoneCloud.}
Based on BANMO~\cite{yang2022banmo} and KeyTr~\cite{novotny2022keytr}, we propose this BoneCloud method, which is a learnable bone basis.
Compared to BANMO, this BoneCloud method does not employ any nonlinear neural network.
Instead, it has a point cloud in the canonical frame, and each point stores a time sequence of SE(3) transformations.
The skinning weights are created by a pre-fixed radial basis function. Consequently, a 3D point ${\bm x}$ in the canonical frame can be transformed to ${\bm y}$ at frame $t$, via linear blend skinning. Specifically, it is given by
\begin{equation}
    \begin{pmatrix}
     {\bm y} \\
     1
    \end{pmatrix}
    = \left(\sum_{k} w_{k} {\bm T}_{k}^t \right) 
    \begin{pmatrix}
     {\bm x} \\
     1
    \end{pmatrix}
\end{equation}
and
\begin{align}
    \Tilde{w}_k &= \exp{ (-\sigma \|{\bm x} - {\bm v}_k\|_2)  } \\
    {w}_k &= \frac{\Tilde{w}_k}{\sum_k \Tilde{w}_k},
\end{align}
in which ${\bm v}\in\mathbb{R}^3$ is a bone in the bone cloud, $k$ is the index of the bone, ${\bm T}_k^t \in SE(3)$ denotes the transformation of the bone $k$ at time $t$.
In our experiment, we leverage 1024 points as bones.
During training, we leverage the provided point trajectories to optimize the bone locations at the canonical frame and the bone transformations at individual time steps.
During testing, we transform the testing points in the canonical frame to individual target frames, so as to produce the point trajectories.

\subsubsection{More Results on the Synthetic Dataset}
\label{sec:supp:more_on_synthetic}

Corresponding to Fig.~\ref{fig:synthetic} in the main paper, we show the qualitative results of all DOMA variants in Fig.~\ref{fig:supp:synthetic}. We can see that the affinity field is able to represent all explored linear transformations. 
This indicates the output layer highly influences the motion types that the model can represent.

In order to investigate how the hidden dimension influences the DOF representation, we increase the hidden dimension of DOMA-Trans from 128 to 256. The results are shown in Tab.~\ref{tab:supp:synthetic2}. 
Without smoothness regularization, we can see that a higher hidden dimension slightly improves the performance in some cases, but degrades the performance on translation, probably due to overfitting. 
When applying the smoothness regularization to overcome overfitting, motion prediction on translation is significantly improved, whereas the performances on other linear transformations are much worse. 
On the other hand, the performances of DOMA-Affinity on all motion types are consistently and considerably improved by the smoothness regularization.
Based on these observations, we can conclude that
\begin{itemize}
    \item Both the hidden dimension and the DOFs represented by ${\bm A}$ can influence the model representation power.
    \item Increasing the hidden dimension improves the performance but not always. Overfitting could occur.
    \item The smoothness regularization can improve the performance significantly if the ground truth DOF is explicitly modeled at the output layer. Otherwise, it can degrade the performance. 
    \item Increasing the hidden dimension cannot simply increase the DOF representations. Otherwise, the smoothness regularization should lead to consistent improvements for all linear transformations.
\end{itemize}

\begin{table}
    \centering
    \footnotesize
    \begin{tabular}{lcccc}
    \toprule
         \textit{Methods} & Rotation & Scaling & Shearing & Translation  \\
         \midrule
         -Trans & 2725.4 & 1817.8 & 1619.5 & 1042.4 \\
         -SE(3) & {730.6} & 1991.4 & 1138.3 & 899.4 \\
         -Scaled SE(3) & {801.1} & {685.8} & 1524.7 & {1096.2} \\
         -Affinity & 1486.0 & {915.4} & {622.1} & 822.4\\
         \midrule
         -Trans-E & 38.0 & 1669.6 & {753.6} & 38.8 \\
         -SE(3)-E & 20.0 & 1761.3 & 832.7 & 26.4 \\
         -scaled SE(3)-E & 21.2 & {1161.8} & 961.1 & 24.0 \\
         -Affinity-E & {19.2} & 155.7 & 864.0 & {15.7} \\
         \midrule
         -Trans-H & 4919.9 & 2056.4 & 2446.8 & 37.8 \\
         -SE(3)-H & 52.4 & 2012.4 & 1665.0 & 36.9 \\
         -scaled SE(3)-H & 29.3 & 22.1 & 688.0 & 30.3 \\
         -Affinity-H & \textbf{5.4} & \textbf{26.3} & \textbf{8.5} & \textbf{28.8} \\
        \bottomrule
    \end{tabular}
    \caption{Results on Synthetic sequences w.r.t. EPE (in $\times 10^{-4}$). `-E' denotes the elasticity loss proposed in Nerfies~\cite{park2021nerfies}, and `-H' denotes our smoothness loss. Best results are in boldface. }
    \label{tab:synthetic}
\end{table}

\begin{table}
    \centering
    \footnotesize
    \scalebox{0.95}{\begin{tabular}{lcccc}
    \toprule
         \textit{Methods} & Rotation & Scaling & Shearing & Translation  \\
         \midrule
         -Trans-128d & 2725.4 & 1817.8 & 1619.5 & 1042.4\\
         -Trans-256d & 2187.1 & 1846.9 & 1515.8 & 1231.8\\
         -Affinity-128d & 1486.0 & {915.4} & {622.1} & 822.4\\
         \midrule
         -Trans-H-128d(0.1) & 4919.9 & 2056.4 & 2446.8 & 37.8\\
         -Trans-H-256d(0.1) & 4911.8 & 2078.7 & 1733.7 & 217.4 \\
         -Trans-H-256d(1) & 10945.3 & 8400.4 & 8701.2 & \textbf{16.2} \\
         -Affinity-H-128d(0.1) & \textbf{5.4} & \textbf{26.3} & \textbf{8.5} & {28.8} \\
    \bottomrule
    \end{tabular}}
    \caption{Results on Synthetic sequences as in Tab.~\ref{tab:synthetic} in the main paper. Numbers denote EPE in $\times 10^{-4}$. `-128d' and `-256d' denote the hidden dimension of the SIREN network. The number in $()$ denotes the weight of the smoothness loss term. Best results are in {boldface}.}
    \label{tab:supp:synthetic2}
\end{table}

\begin{figure*}
    \centering
    \includegraphics[width=\linewidth]{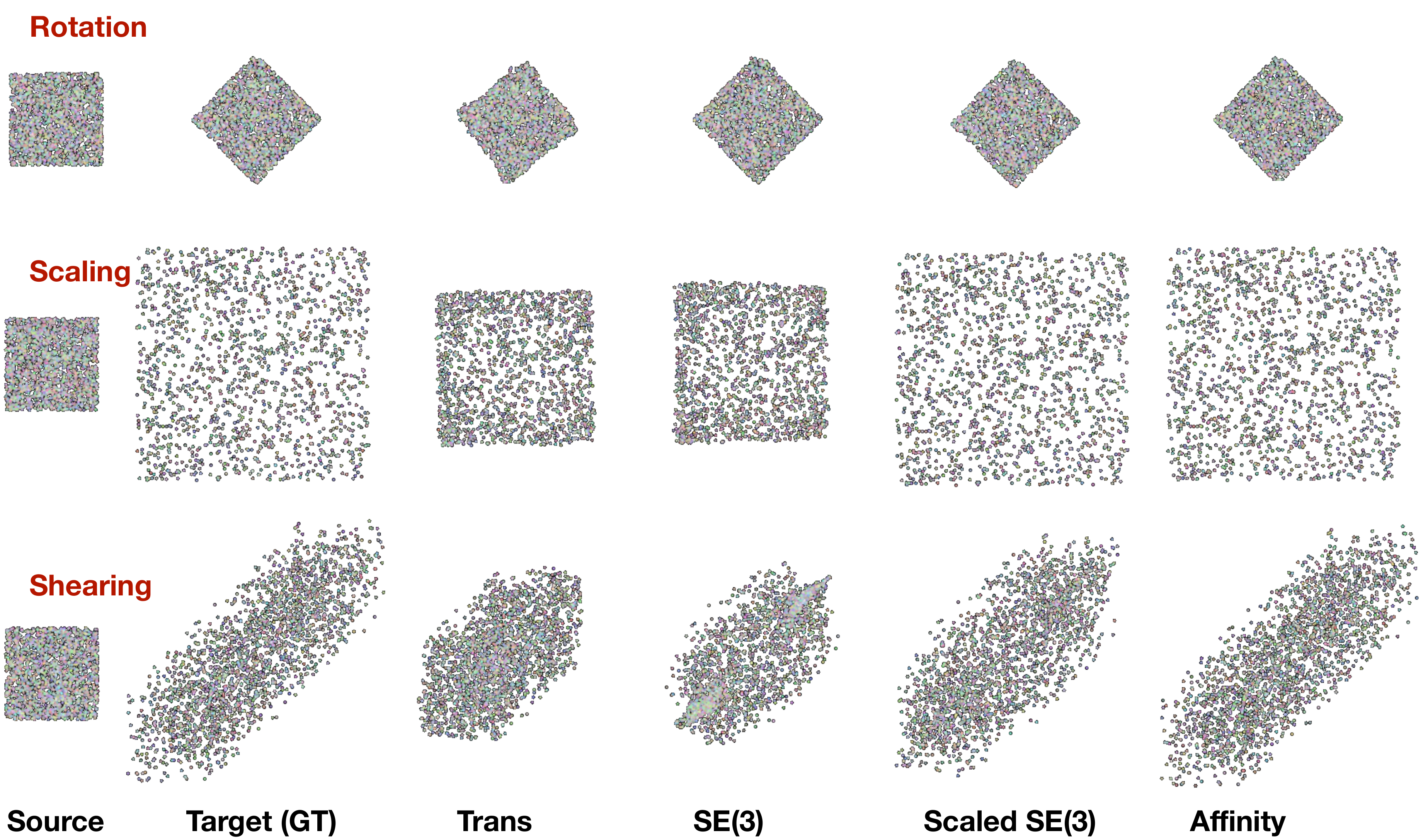}
    \caption{Illustrations of results on the Synthetic sequences. The smoothness regularization is applied. Rows show types of motions, and columns show the testing points in the canonical frame, a target frame, and estimated results from different methods, respectively. }
    \label{fig:supp:synthetic}
\end{figure*}

\paragraph{Runtime analysis.}
In addition, we compare our derived analytical gradients with auto-diff of Pytorch~\cite{paszke2019pytorch} w.r.t. the runtime. We set the smoothness loss weight to 0.1, and train DOMA-Affinity for 1000 iterations. 
This experiment is conducted with Ubuntu 20.04, NVIDIA TITAN RTX 24GB, CUDA 11.4, 32GB RAM.
The results are shown in Tab.~\ref{tab:supp:runtime}. 
We can see the analytical gradients improve the efficiency consistently. 
Compared to the standard auto-diff, the runtime is reduced by 28\%.

\begin{table}
    \centering
    \scriptsize
    \begin{tabular}{lccccc}
    \toprule
         \textit{Methods} & Rotation & Scaling & Shearing & Translation & average \\
         \midrule
         auto-diff~\cite{paszke2019pytorch} & 133.68 & 132.36 & 133.44 & 133.51 & 133.25\\
         analytical grad & 96.08 & 96.26 & 95.78 & 95.92 & 96.01\\
         
    \bottomrule
    \end{tabular}
    \caption{Comparison between our derived analytical gradients and auto-diff of Pytorch. Runtime is measured in seconds.}
    \label{tab:supp:runtime}
\end{table}

\subsection{Additional Presentations on Guided Mesh Alignment (Sec.~\ref{sec:exp2})}

\subsubsection{Dataset}
We employ the ReSynth dataset~\cite{ma2021scale,ma2021power} in this study. 
Specifically, we choose 16 sequences from 4 subjects in the \textit{packed} sequences in the \textit{test} split (see Tab.~\ref{tab:supp:resynth}).
For each sequence, we first perform down-sampling by every 2 frames, and then select the first 30 frames for experiments. The first frame in each sequence is regarded as the canonical frame.

The motion complexity depends on the subject and the clothing type. 
As shown in Fig.~\ref{fig:supp:resynth}, sequences with `rp\_felice\_posed\_004' are more complex than others, because of the loose long skirt. In this case, the points on the long skirt are far away from the body surface, which is aligned and guided by the SMPL-X~\cite{SMPL-X:2019} mesh vertices.
Other subjects have tight clothes.

\begin{figure*}
    \centering
    \includegraphics[width=\linewidth]{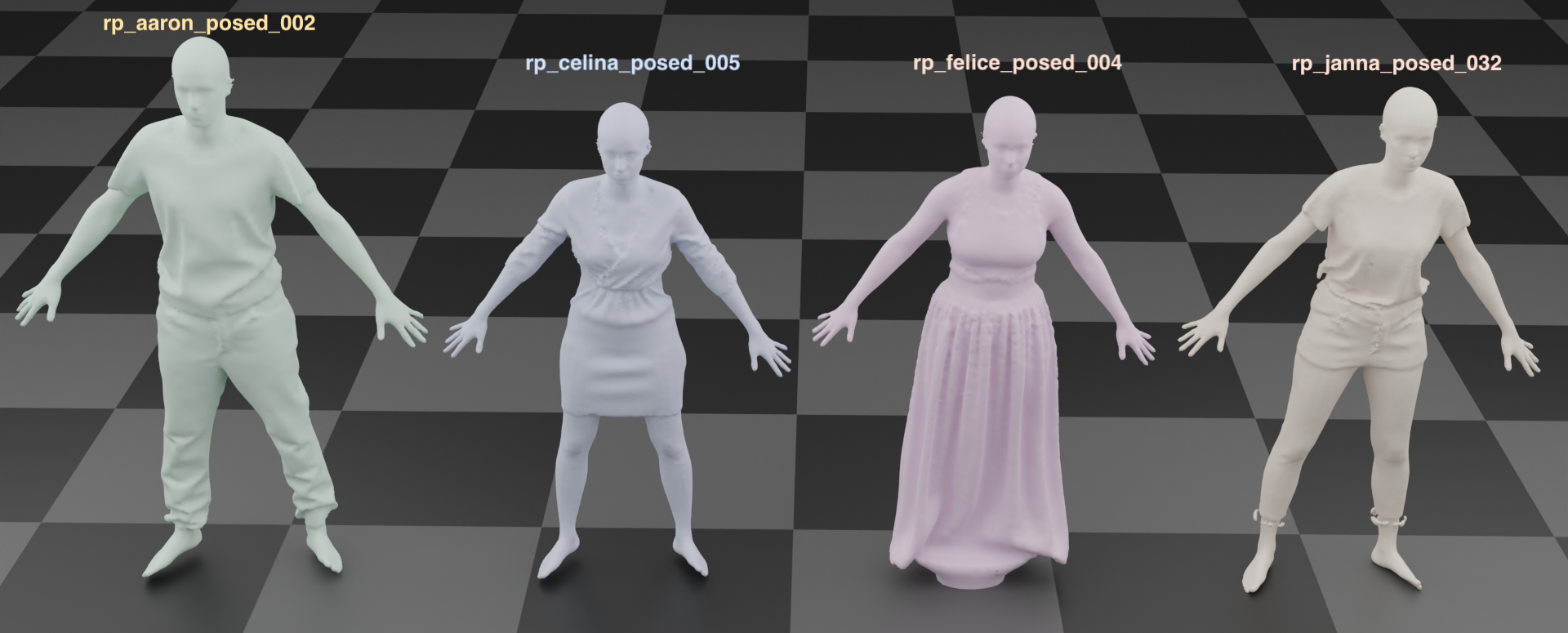}
    \caption{Illustrations of the 4 subjects in our employed sequences. These meshes are created by Poisson surface reconstruction based on the provided oriented points in the canonical frames. We have created 16 such meshes for individual sequences, which are roughly at the same pose, \ie the A-pose.}
    \label{fig:supp:resynth}
\end{figure*}

\begin{table}
    \centering
    \begin{tabular}{lc}
    \toprule
          Subjects & Actions \\
           \midrule
         rp\_aaron\_posed\_002 & \begin{tabular}{@{}c@{}} 96\_jerseyshort\_hips \\  96\_jerseyshort\_squats \\ 96\_longshort\_flying\_eagle \\ 96\_longshort\_tilt\_twist\_left \end{tabular} \\
           \midrule
         rp\_celina\_posed\_005 & \begin{tabular}{@{}c@{}} 96\_jerseyshort\_hips \\  96\_jerseyshort\_squats \\ 96\_longshort\_flying\_eagle \\ 96\_longshort\_tilt\_twist\_left \end{tabular} \\
           \midrule
         rp\_felice\_posed\_004 & \begin{tabular}{@{}c@{}} 96\_jerseyshort\_hips \\  96\_jerseyshort\_squats \\ 96\_longshort\_flying\_eagle \\ 96\_longshort\_tilt\_twist\_left \end{tabular} \\
           \midrule
         rp\_janna\_posed\_032 & \begin{tabular}{@{}c@{}} 96\_jerseyshort\_hips \\  96\_jerseyshort\_squats \\ 96\_longshort\_flying\_eagle \\ 96\_longshort\_tilt\_twist\_left \end{tabular} \\    
        \bottomrule
    \end{tabular}
    \caption{Employed Resynth sequences in Sec.~\ref{sec:exp2}.}
    \label{tab:supp:resynth}
\end{table}

\subsubsection{Performances of All Model Variants}
In Tab.~\ref{tab:resynth}, we only show the performance of the DPF baseline, DOMA-Trans, and DOMA-Affinity. Here we show the performances of all DOMA variants under the same experiment setting.
The results are presented in Tab.~\ref{tab:supp:resynth_all}.
We can draw similar conclusions as in Sec.~\ref{sec:exp2}. The performance of the affinity field is similarly better than other variants, in particular on the Chamfer distances.

\begin{table}
    \centering
    \footnotesize
    \begin{tabular}{lcccc}
    \toprule
          & $\mathcal{L}_{CD}\downarrow$ & $\mathcal{L}_n\downarrow$ & \textit{STD(E)}$\downarrow$  & \textit{STD(V)}$\downarrow$   \\
           \midrule
         DPF~\cite{prokudin2023dynamic} & 1.149 & \textbf{0.122} & \textbf{11.6} & 24.6\\
         -Trans & 1.230 & 0.128 & 12.8 & 22.9  \\
         -SE(3) & 1.343 & 0.134 & 16.2 & 22.9  \\
         -Scaled SE(3) & 1.273 & 0.127 & 16.2 & 22.8  \\
         -Affinity & \textbf{1.142} & 0.125 & 11.9 & \textbf{22.8} \\
         \midrule
         DPF-A~\cite{prokudin2023dynamic} & 1.166 & \textbf{0.119} & \textbf{10.3} & 24.2\\
         -Trans-A & 1.195 & 0.123 & 10.4 & \textbf{23.0}  \\
         -SE(3)-A & 1.278 & 0.123 & 11.5 & \textbf{23.0}  \\
         -Scaled SE(3)-A & 1.20 & 0.120 & 11.3 & 23.0  \\
         -Affinity-A & \textbf{1.151} & 0.122 & 10.6 & \textbf{23.0} \\
         \midrule
          DPF-H~\cite{prokudin2023dynamic} & 1.142 & \textbf{0.123} & 10.3 & 24.2\\
         -Trans-H & 1.207 & 0.128 & 10.8 & \textbf{22.9}  \\
         -SE(3)-H & 1.230 & 0.127 & 12.2 & 23.0  \\
         -Scaled SE(3)-H & 1.189 & 0.125 & 11.5 & 22.9  \\
         -Affinity-H & \textbf{1.127} & 0.127 & \textbf{10.1} & \textbf{22.9} \\
         \midrule
         DPF-AH~\cite{prokudin2023dynamic} & 1.189 & \textbf{0.120} & 9.3 & 24.3\\
         -Trans-AH & 1.240 & 0.124 & 9.3 & \textbf{23.0}  \\
         -SE(3)-AH & 1.265 & 0.124 & 10.9 & 23.1  \\
         -Scaled SE(3)-AH & 1.255 & 0.126 & 8.7 & 23.0  \\
         -Affinity-AH & \textbf{1.187} & 0.124 & \textbf{8.9} & \textbf{23.0} \\
        \bottomrule
    \end{tabular}
    \caption{Results of guided mesh alignment on our selected Resynth sequences. $\mathcal{L}_{CD}$ is in $\times 10^{-4}$. \textit{STD(E)} and \textit{STD(V)} are given in millimeters. This table is supplementary to Tab.~\ref{tab:resynth} in the main paper.}
    \label{tab:supp:resynth_all}
\end{table}

\subsubsection{Analysis on Clothing Types}
In addition to the averaged performance on all sequences, we have also observed consistent trends on individual subjects that have different clothing types.
In this experiment, we leave `rp\_felice\_posed\_004' out of others, and perform evaluations separately.
For compactness, we only show the comparison between our proposed affinity field and the frame-wise DPF models~\cite{prokudin2023dynamic}, with the weights of the AIAP loss term and the motion smoothness term being $(1, 0.001)$.
The results are shown in Tab.~\ref{tab:supp:persubject}.
We can see that the affinity field outperforms DPF~\cite{prokudin2023dynamic} on the subject with the long skirt, whereas performs worse on subjects with tight clothing. 
A probable reason is that points that are close to the body surface can be effectively guided by the SMPL-X mesh vertices. Due to much more model parameters, the frame-wise DPF models can overfit to the guidance points, and hence produces better results on the body surfaces and the tight clothes. 
Simultaneously, it produces more artifacts and discontinuities at regions that are far away from the guidance points, leading to inferior performance to the affinity field.

\begin{table}
    \centering
    \footnotesize
    \setlength{\tabcolsep}{2.5pt}
    \begin{tabular}{lccccc}
    \toprule
      Subjects & Methods & \textit{CD}$\downarrow$ & \textit{CDN}$\downarrow$ & \textit{STD(E)}$\downarrow$ & \textit{STD(V)}$\downarrow$ \\
      \midrule
      \multirow{2}{*}{rp\_felice\_posed\_004} & DPF-AH~\cite{prokudin2023dynamic} & 3.086 & 0.194 & \textbf{14.4} & 25.8 \\
      & -Affinity-AH & \textbf{2.857} & \textbf{0.190} & 15.3 & \textbf{21.9} \\
      \midrule
      \multirow{2}{*}{others} & DPF-AH~\cite{prokudin2023dynamic} & \textbf{0.557} & \textbf{0.096} & 7.6 & 23.8 \\
      & -Affinity-AH & 0.630 & 0.102 & \textbf{6.8} & \textbf{23.3} \\
    \bottomrule
    \end{tabular}
    \caption{Evaluation of methods on the Resynth sequence `rp\_felice\_posed\_004' as discussed in Section~\ref{sec:exp2}.}
    \label{tab:supp:persubject}
\end{table}

\begin{table}
    \centering
    \footnotesize
    \setlength{\tabcolsep}{2.2pt}
    \scalebox{0.95}{
    \begin{tabular}{lccccc}
    \toprule
      Subjects & Methods & \textit{CD}$\downarrow$ & \textit{CDN}$\downarrow$ & \textit{STD(E)}$\downarrow$ & \textit{STD(V)}$\downarrow$ \\
      \midrule
      \multirow{5}{*}{all sequences} & DPF-AH~\cite{prokudin2023dynamic} & 1.047 & \textbf{0.100} & 10.1 & 24.2 \\
      & -Trans-AH & 1.058 & 0.106 & 10.6 & \textbf{23.2} \\
      & -SE(3)-AH & 1.055 & 0.105 & 10.9 & \textbf{23.2} \\
      & -Scaled SE(3)-AH & 1.060 & 0.104 & 10.3 & \textbf{23.2} \\
      & -Affinity-AH & \textbf{1.023} & 0.105 & \textbf{9.8} & \textbf{23.2} \\
      
      \midrule
      \multirow{5}{*}{rp\_felice\_posed\_004} & DPF-AH~\cite{prokudin2023dynamic} & 2.740 & 0.151 & \textbf{{15.6}} & 25.3 \\
      & -Trans-AH & 2.724 & 0.157 & 16.0 & 22.6 \\
      & -SE(3)-AH & 2.683 & 0.149 & 18.5 & \textbf{22.5} \\
      & -Scaled SE(3)-AH & {2.694} & \textbf{0.148} & 17.5 & 22.6 \\
      & -Affinity-AH & \textbf{2.544}  & 0.152 & 16.5 & \textbf{22.5} \\
      \midrule
      \multirow{5}{*}{others} & DPF-AH~\cite{prokudin2023dynamic} & \textbf{0.483} & \textbf{0.083} & 8.3 & 23.8 \\
      & -Trans-AH & 0.503 & 0.089 & 8.8 & \textbf{23.4} \\
      & -SE(3)-AH & 0.513 & 0.090 & 8.4 & 23.5 \\
      & -Scaled SE(3)-AH & 0.516 & 0.089 & 7.9 & 23.5 \\
      & -Affinity-AH & 0.515 & 0.090 & \textbf{7.6} & 23.5 \\      
    \bottomrule
    \end{tabular}}
    \caption{Evaluations based on the models with 256D hidden variables. Other settings are identical with Tab.~\ref{tab:resynth} and~\ref{tab:supp:persubject}.
    Best results are highlighted in boldface.}
    \label{tab:supp:256dim}
\end{table}

\subsubsection{Influence of Hidden Dimensions}
We set the hidden dimension to 128 by default in the main paper and the above experiments. Here we increase it to 256 and re-evaluate the performances. According to our analysis of the motion model bound (see Sec.~\ref{sec:method} and~\ref{sec:supp:theorem}), increasing the hidden dimension is able to improve the model representation power on the motion complexity.

The results are presented in Tab.~\ref{tab:supp:256dim}.
Compared to models with 128D hidden variables (see Tab.~\ref{tab:resynth} and Tab.~\ref{tab:supp:persubject}), models with 256D hidden variables consistently produce better results. With this new setting, the performance gaps between individual methods tend to vanish. The temporal smoothness tends to degrade though.

In the meanwhile, we can see that the affinity field still has comparably better performance than frame-wise DPF~\cite{prokudin2023dynamic}, but produces smoother results, leading to the same observation and conclusion as demonstrated in Sec.~\ref{sec:exp2}.
Focusing on the performances on different sequences, we can see DPF~\cite{prokudin2023dynamic} still outperforms DOMA models on tight clothing w.r.t. alignment, but the gap becomes smaller compared to Tab.~\ref{tab:resynth}.
The affinity field outperforms DPF on the loose long skirt sequence by a large margin.
Furthermore, from the model size perspective, our DOMA models are still significantly more lightweight than the DPF~\cite{prokudin2023dynamic} baseline.

\section{Additional Experiments}

\begin{figure*}
    \centering
    \includegraphics[width=\linewidth]{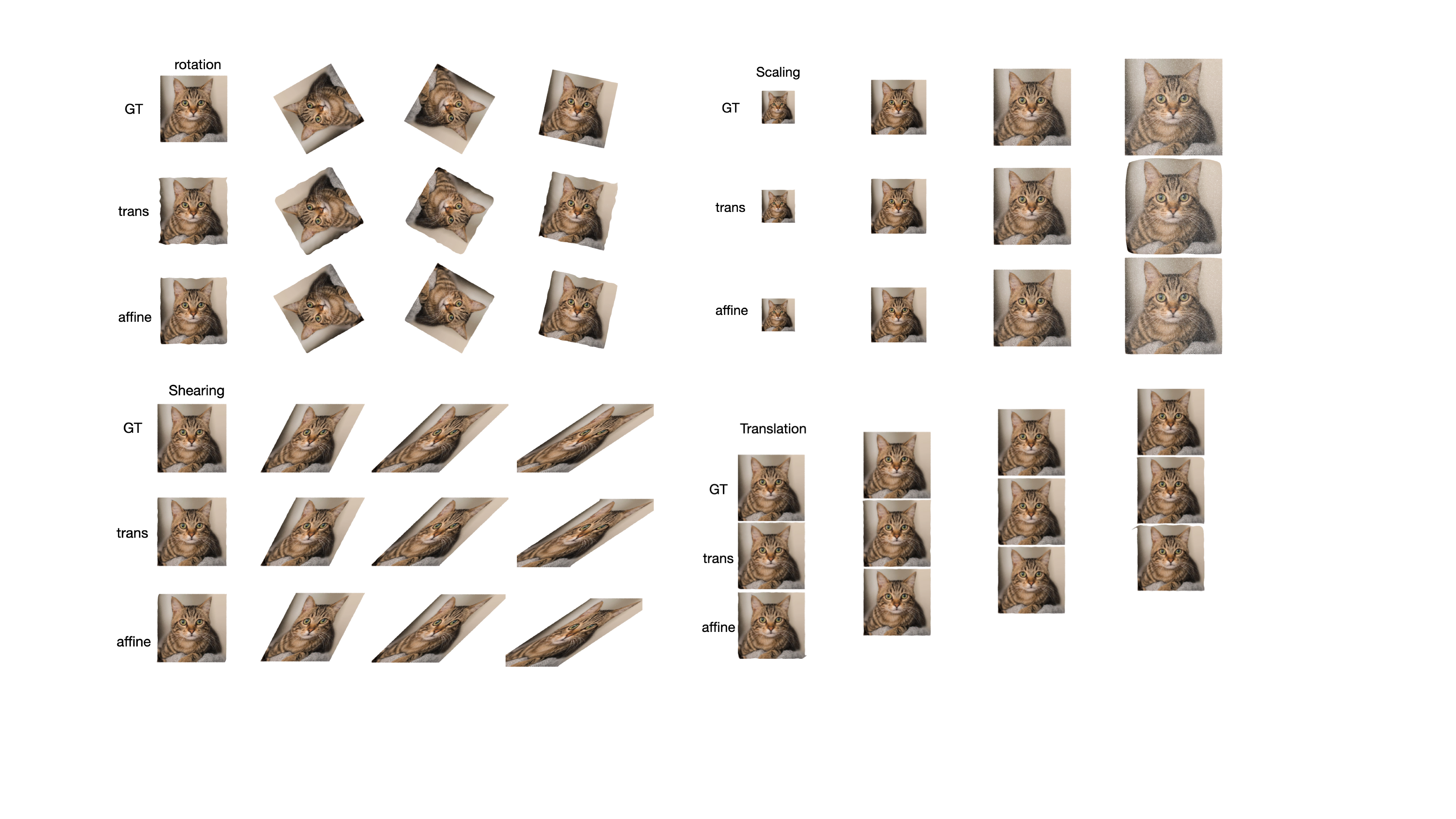}
    \caption{Illustration of modeling 2D image deformation, in which the hidden dimension is 64. `GT', `trans', and `affine' denote the ground truth, DOMA-Trans, and DOMA-Affinity, respectively.}
    \label{fig:supp:cat}
\end{figure*}

\begin{table*}
    \centering
    \footnotesize
    \begin{tabular}{lcccccccc}
    \toprule
    & \multicolumn{2}{c}{Rotation} & \multicolumn{2}{c}{Scaling} & \multicolumn{2}{c}{Shearing} & \multicolumn{2}{c}{Translation} \\
    & -Trans & -Affinity & -Trans & -Affinity & -Trans & -Affinity & -Trans & -Affinity \\
    \midrule
    hdim=32 & 60.8 & 85.0 & 31.2 & 8.9 & 11.7 & 5.4 & 100.4 & 44.0 \\
    hdim=64 & 64.5 & 33.9 & 15.3 & 6.2 & 10.5 & 4.3 & 25.8 & 18.7 \\
    hdim=128 & 35.4 & 20.3 & 8.2 & 4.6 & 6.4 & 3.8 & 20.2 & 20.4 \\
    \bottomrule
    \end{tabular}
    \caption{Evaluations in the 2D image deformations. As in Tab.~\ref{tab:synthetic}, the numbers denote EPE in $\times 10^{-4}$, and are the lower the better. }
    \label{tab:supp:image}
\end{table*}

\subsection{Learning 2D Image Deformation}

Similar to the experiments on the 3D synthetic dataset, here we conduct an experiment on 2D image deformation, in order to further investigate the representation power of DOMA models.

\paragraph{Data, evaluation, and methods.}
We use a RGB image of cat that has 512$\times$512 of pixels.
As our synthetic dataset, we perform translation, rotation, scaling, and shearing in the 2D domain, and produce 30 frames.
In each case, we randomly choose 25\% points for training the motion field, and use the remaining for testing.
The evaluation metric is the same as in Sec.~\ref{sec:exp1}.
DOMA-Trans and DOMA-Affinity with different hidden dimensions are applied in this experiment. 
No regularization is used during training.

\paragraph{Results.}
The quantitative evaluation is shown in Tab.~\ref{tab:supp:image}, and some qualitative results are shown in Fig.~\ref{fig:supp:cat}.
We can see that the affinity model outperforms the translation model consistently with different hidden dimensions. 
In particular, the performance of the affinity model is superior when the hidden dimension is smaller.
These results demonstrate the advantages of additional DOFs.

\begin{figure*}
    \centering
    \includegraphics[width=\linewidth]{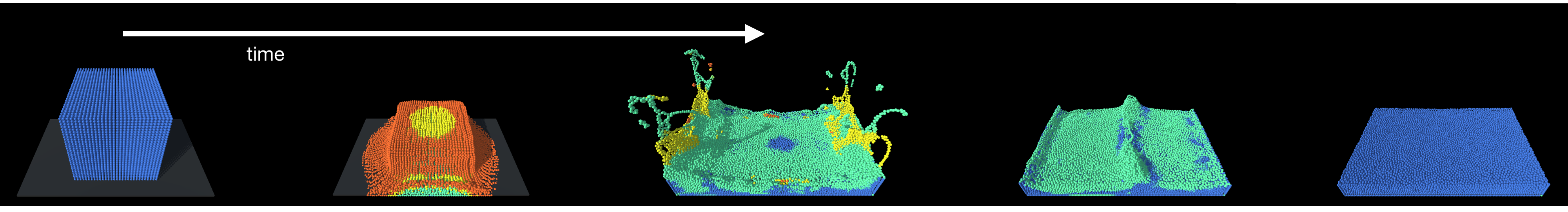}
    \caption{The particle system to simulate a fluid field in Unity3D, which is implemented based on~\cite{fluidsim}.  }
    \label{fig:supp:particle}
\end{figure*}

\subsection{Inferring Dynamics of Fluid Fields}
\label{sec:supp:fluid}
In Sec.~\ref{sec:exp1}, we have investigated the model representation power based on the DeformingThings4D~\cite{li20214dcomplete} sequences. Despite various model shapes and movements, they are limited to elastic deformations of solid objects.
In this section, we propose a more challenging scenario, modeling a fluid field. 
To perform empirical studies, we follow~\cite{fluidsim} to simulate how liquid moves in a bounded field with Unity3D, and record the particle trajectories (see Fig.~\ref{fig:supp:particle}).

The entire sequence contains 931 frames and 27,000 particles. We randomly choose 50\% for training the motion field and use the rests for testing.
We find that all methods investigated in this paper are not able to reconstruct the entire sequence. Thus, we down-sample the entire sequence by every 2 frames, and then trim the down-sampled sequence into 10-frame clips. 
Specifically, the frame indices of the clips are $\{(t,t+10)\}_{t=10,15,\ldots,325}$, in which the sequences with trivial motions, \eg static state in the beginning and steady state in the end, are excluded. This pre-processing will lead to 22 clips in total.

In each clip, the first frame is regarded as the canonical frame. Points in the canonical frame are transformed into individual target frames, and their averaged L1 distances to the ground truth are minimized during training. 
The evaluation metric is the scene flow end point error (EPE), identical to Sec.~\ref{sec:exp1}.
We use the Adam optimizer~\cite{kingma2014adam} for training. The initial learning rate is $1e-4$. Training terminates after 5000 epochs.

In this experiment, we compare frame-wise DPF~\cite{prokudin2023dynamic}, DOMA-Trans, and DOMA-Affinity. We set these models to have 2 hidden layers, and set their hidden dimensions to be 64D or 128D. 
Results are presented in Tab.~\ref{tab:supp:particle}.
We can see that these two spatiotemporal motion field models considerably outperform the DPF baseline.
In addition, the affinity model performs comparably better than the translation field model.
With fewer hidden variables, the advantage of the additional DOFs are more obvious.
Fig.~\ref{fig:supp:particle_vis} illustrates some examples of how these methods perform.
We can see that the frame-wise DPF method can lead to significant discontinuities between frames, and less accurate motion prediction than DOMA.

\begin{table}
    \centering
    \begin{tabular}{lcc}
    \toprule
          & 64D & 128D  \\
         \midrule
         DPF~\cite{prokudin2023dynamic} & 400.20 & 501.60\\
         \midrule
         DOMA-Trans & 176.43 & 188.22\\
         DOMA-Affinity & {168.89} & 183.33\\
        \bottomrule
    \end{tabular}
    \caption{Motion prediction of unseen points on fluid simulation sequences. The numbers are EPEs in $\times 10^{-4}$. Results of 64D and 128D hidden dimensions are both presented.}
    \label{tab:supp:particle}
\end{table}

\begin{figure*}
    \centering
    \includegraphics[width=\linewidth]{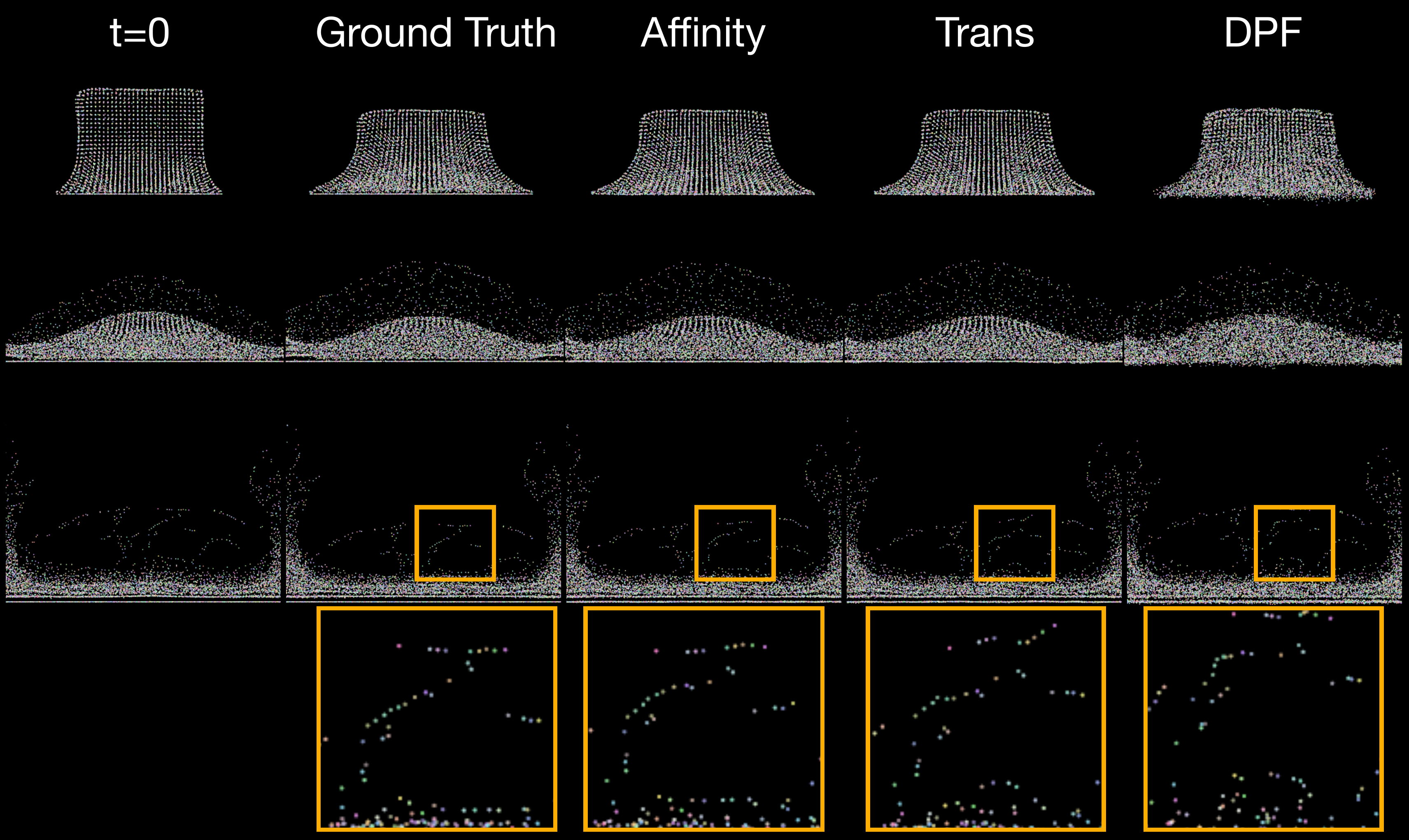}
    \caption{Illustrations of three fluid dynamics estimation results. The hidden variables have 64 dimensions. The first three rows denote three sequences. Columns from left to right denote the first frame, the last frame of ground truth, DOMA-Affinity, DOMA-Trans, and frame-wise DPF, respectively. The zoomed-in regions in the third row are shown below their respective images.  }
    \label{fig:supp:particle_vis}
\end{figure*}

\end{document}